\newcommand{\Exp}{\mathds{E}}
\newcommand{\Prob}{\mathds{P}}
\newcommand{\Real}{\mathds{R}}
\newcommand{\Ind}{\mathds{1}}
\newcommand{\so}{\succcurlyeq_{\rm so}}
\newcommand{\ssd}{\succcurlyeq_{\rm ssd}}
\DeclareMathOperator*{\argmax}{arg\,max}
\newcommand{\Sc}{\mathcal{S}}
\newcommand{\Ac}{\mathcal{A}}
\newcommand{\Mc}{\mathcal{M}}
\newcommand{\Hc}{\mathcal{H}}
\newtheorem{theorem}{Theorem}
\newtheorem{lemma}{Lemma}
\newtheorem{definition}{Definition}
\newtheorem{conjecture}{Conjecture}
\newenvironment{hproof}{%
  \proof}{\endproof}
\icmltitlerunning{Why is Posterior Sampling Better than Optimism for Reinforcement Learning?}
\begin{document}

\twocolumn[
\icmltitle{Why is Posterior Sampling Better than Optimism for Reinforcement Learning?}



\icmlsetsymbol{equal}{*}

\begin{icmlauthorlist}
\icmlauthor{Ian Osband}{stan,dm}
\icmlauthor{Benjamin Van Roy}{stan}
\end{icmlauthorlist}

\icmlaffiliation{stan}{Stanford University, California, USA}
\icmlaffiliation{dm}{Deepmind, London, UK}

\icmlcorrespondingauthor{Ian Osband}{ian.osband@gmail.com}
\icmlkeywords{boring formatting information, machine learning, ICML}

\vskip 0.3in
]



\printAffiliationsAndNotice{} 

\begin{abstract}
{\hyphenpenalty=1000
\exhyphenpenalty=1000
Computational results demonstrate that \mbox{posterior} sampling for reinforcement learning (PSRL) \mbox{dramatically} outperforms existing algorithms driven by optimism, such as UCRL2.
We provide insight into the extent of this performance boost and the phenomenon that drives it.
We leverage this insight to establish an $\tilde{O}(H\sqrt{SAT})$ Bayesian regret bound for PSRL in finite-horizon episodic Markov decision processes.
This improves upon the best previous Bayesian regret bound of $\tilde{O}(H S \sqrt{AT})$ for any reinforcement learning algorithm.
Our theoretical results are supported by extensive empirical evaluation.
}
\end{abstract}

\section{Introduction}
\label{sec: intro}
We consider the reinforcement learning problem in which an agent interacts with a Markov decision process with the aim of maximizing expected cumulative reward \citep{burnetas1997optimal,Sutton1998}.
Key to performance is how the agent balances between exploration to acquire information of long-term benefit and exploitation to maximize expected near-term rewards.
In principle, dynamic programming can be applied to compute the Bayes-optimal solution to this problem \citep{bellman1959adaptive}.
However, this is computationally intractable for anything beyond the simplest of toy problems and direct approximations can fail spectacularly poorly \citep{munos2014bandits}.
As such, researchers have proposed and analyzed a number of heuristic reinforcement learning algorithms.

The literature on efficient reinforcement learning offers statistical efficiency guarantees for computationally tractable algorithms.
These provably efficient algorithms \citep{Kearns2002,Brafman2002}
predominantly address the exploration-exploitation trade-off via \textit{optimism in the face of uncertainty} (OFU): at any state, the agent assigns to each action an optimistically biased estimate of future value and selects the action with the greatest estimate.
If a selected action is not near-optimal, the estimate must be overly optimistic, in which case the agent learns from the experience.
Efficiency relative to less sophisticated exploration arises as the agent avoids actions that can neither yield high value nor informative data.

An alternative approach, based on Thompson sampling \citep{Thompson1933}, involves sampling a statistically plausibly set of action values and selecting the maximizing action.
These values can be generated, for example, by sampling from the posterior distribution over MDPs and computing the state-action value
function of the sampled MDP.  This approach, originally proposed in \citet{Strens00}, is called posterior sampling for reinforcement learning (PSRL).
Computational results from \citet{Osband2013} demonstrate that PSRL dramatically outperforms existing algorithms based on OFU.
The primary aim of this paper is to provide insight into the extent of this performance boost and the phenomenon that drives it.

We show that, in Bayesian expectation and up to constant factors, PSRL matches the statistical efficiency of \textit{any} standard algorithm for OFU-RL.
We highlight two key shortcomings of existing state of the art algorithms for OFU \cite{Jaksch2010} and demonstrate that PSRL does not suffer from these inefficiencies.
We leverage this insight to produce an $\tilde{O}(H\sqrt{SAT})$ bound for the Bayesian regret of PSRL in finite-horizon episodic Markov decision processes where $H$ is the horizon, $S$ is the number of states, $A$ is the number of actions and $T$ is the time elapsed.
This improves upon the best previous bound of $\tilde{O}(H S \sqrt{AT})$ for any RL algorithm.
We discuss why we believe PSRL satisfies a tighter $\tilde{O}(\sqrt{HSAT})$, though we have not proved that.
We complement our theory with computational experiments that highlight the issues we raise; empirical results match our theoretical predictions.

More importantly, we highlights a tension in OFU RL between statistical efficiency and computational tractability.
We argue that any OFU algorithm that matches PSRL in statistical efficiency would likely be computationally intractable.
We provide proof of this claim in a restricted setting.
Our key insight, and the potential benefits of exploration guided by posterior sampling, are not restricted to the simple tabular MDPs we analyze.


\section{Problem formulation}
\label{sec: problem formulation}

We consider the problem of learning to optimize a random finite-horizon MDP
{\medmuskip=0mu
\thinmuskip=0mu
\thickmuskip=0mu
$M^* = (\Sc, \Ac, R^* \hspace{-1mm}, P^*\hspace{-1mm}, H, \rho)$}
over repeated episodes of interaction, where
$\Sc =\{1,..,S\}$ is the state space, $\Ac=\{1,..,A\}$ is the action space, $H$ is the horizon, and $\rho$ is the initial state distribution.
In each time period $h=1,..,H$ within an episode, the agent observes state $s_h \in \Sc$, selects action $a_h \in \Ac$,
receives a reward $r_h \sim R^*(s_h,a_h)$, and transitions to a new state $s_{h+1} \sim P^*(s_h, a_h)$.
We note that this formulation, where the unknown MDP $M^*$ is treated as itself a random variable, is often called Bayesian reinforcement learning.

A policy $\mu$ is a mapping from state $s \in \Sc$ and period $h=1,..,H$ to action $a \in \Ac$.
For each MDP $M$ and policy $\mu$ we define the state-action value function for each period $h$:
{\medmuskip=2mu
\thinmuskip=1mu
\thickmuskip=2mu
\vspace{-4mm}
\begin{equation}
\label{eq: q value tabular}
  Q^{M}_{\mu, h}(s, a) := \Exp_{M,\mu}\left[ \sum_{j=h}^{H} \overline{r}^M(s_j,a_j) \Big| s_h = s, a_h=a \right],
\end{equation}
}
where
{\medmuskip=2mu
\thinmuskip=1mu
\thickmuskip=2mu
$\overline{r}^M(s,a) = \Exp[ r | r \sim R^M(s,a) ]$.
The subscript $\mu$ indicates that actions over periods $h+1,\ldots,H$ are selected according to the policy $\mu$.
Let $V^{M}_{\mu, h}(s) := Q^M_{\mu, h}(s, \mu(s,h))$.
We say a policy $\mu^M$ is optimal for the MDP $M$ if
\mbox{$\mu^M \in \argmax_{\mu} V^{M}_{\mu, h}(s)$ for all $s \in \Sc$ and $h=1,\ldots,H$}.

Let $\Hc_t$ denote the history of observations made \emph{prior} to time $t$.
To highlight this time evolution within episodes, with some abuse of notation, we let $s_{kh} = s_t$ for \mbox{$t=(k-1)H+h$}, so that
$s_{kh}$ is the state in period $h$ of episode $k$.
We define $\Hc_{kh}$ analogously.
An RL algorithm is a deterministic sequence $\{\pi_k | k = 1, 2, \ldots\}$ of functions, each mapping $\Hc_{k1}$ to a probability distribution $\pi_{k}(\Hc_{k1})$ over policies,
from which the agent samples a policy $\mu_k$ for the $k$th episode.
We define the regret incurred by an RL algorithm $\pi$ up to time $T$ to be
\vspace{-3mm}
\begin{equation}
\label{eq: regret}
    {\rm Regret}(T, \pi, M^*) := \sum_{k=1}^{\lceil T/H \rceil} \Delta_k,
\end{equation}
where $\Delta_k$ denotes regret over the $k$th episode, defined with respect to true MDP $M^*$ by
\begin{equation}
  \Delta_k := \sum_{\Sc} \rho(s) (V^{M^*}_{\mu^*, 1}(s) - V^{M^{*}}_{\mu_k, 1}(s))
\end{equation}
with $\mu^* = \mu^{M^*}$.
We note that the regret in \eqref{eq: regret} is random, since it depends on the unknown MDP $M^*$, the learning algorithm $\pi$ and through the history $\Hc_t$ on the sampled transitions and rewards.
We define
{\small
\begin{equation}
\label{eq: bayes_regret}
    {\rm BayesRegret}(T, \pi, \phi) := \Exp \left[{\rm Regret}(T, \pi, M^*) \mid M^* \sim \phi \right],
\end{equation}}
\noindent \hspace{-1.5mm} as the Bayesian expected regret for $M^*$ distributed according to the prior $\phi$.
We will assess and compare algorithm performance in terms of the regret and BayesRegret.

\subsection{Relating performance guarantees}
\label{sec:bayes_freq}

For the most part, the literature on efficient RL is sharply divided between the frequentist and Bayesian perspective \cite{vlassis2012bayesian}.
By volume, most papers focus on minimax regret bounds that hold with high probability for any $M^* \in \Mc$ some class of MDPs \cite{Jaksch2010}.
Bounds on the BayesRegret are generally weaker analytical statements than minimax bounds on regret.
A regret bound for any $M^* \in \Mc$ implies an identical bound on the BayesReget for any $\phi$ with support on $\Mc$.
A partial converse is available for $M^*$ drawn with non-zero probability under $\phi$, but does not hold in general \cite{Osband2013}.

Another common notion of performance guarantee is given by so-called ``sample-complexity'' or PAC analyses that bound the number of $\epsilon$-sub-optimal decisions taken by an algorithm \cite{Kakade2003,dann2015sample}.
In general, optimal bounds on regret $\tilde{O}(\sqrt{T})$ imply optimal bounds on sample complexity $\tilde{O}(\epsilon^{-2})$, whereas optimal bounds on the sample complexity give only an $\tilde{O}(T^{2/3})$ bound on regret \cite{osband2016thesis}.

Our formulation focuses on the simple setting on finite horizon MDPs, but there are several other problems of interest in the literature.
Common formulations include the discounted setting\footnote{Discount $\gamma = 1 - 1/H$ gives an effective horizon $O(H)$.} and problems with infinite horizon under some connectedness assumption \cite{Bartlett2009}.
This paper may contain insights that carry over to these settings, but we leave that analysis to future work.

Our analysis focuses upon Bayesian expected regret in finite horizon MDPs.
We find this criterion amenable to (relatively) simple analysis and use it obtain actionable insight to the design of practical algorithms.
We absolutely do not ``close the book'' on the exploration/exploitation problem - there remain many important open questions.
Nonetheless, our work may help to develop understanding within some of the outstanding issues of statistical and computational efficiency in RL.
In particular, we shed some light on how and why posterior sampling performs so much better than existing algorithms for OFU-RL.
Crucially, we believe that many of these insights extend beyond the stylized problem of finite tabular MDPs and can help to guide the design of practical algorithms for generalization and exploration via randomized value functions \cite{osband2016thesis}.

\section{Posterior sampling as stochastic optimism}
\label{sec: psrl_stoch_opt}

There is a well-known connection between posterior sampling and optimistic algorithms \cite{Russo2013}.
In this section we highlight the similarity of these approaches.
We argue that posterior sampling can be thought of as a \textit{stochastically} optimistic algorithm.

Before each episode, a typical OFU algorithm constructs a confidence set to represent the range of MDPs that are statistically plausible given prior knowledge and observations.
Then, a policy is selected by maximizing value simultaneously over policies and MDPs in this set.
The agent then follows this policy over the episode.
It is interesting to contrast this approach against PSRL where instead of maximizing over a confidence set, PSRL samples a single statistically plausible MDP and selects a policy to maximize value for that MDP.

{
\medmuskip=2mu
\thinmuskip=1mu
\thickmuskip=2mu
\begin{algorithm}[!h]
\caption{OFU RL}
\textbf{Input:} confidence set constructor $\Phi$
\begin{algorithmic}[1]
\label{alg:ofu_rl}
\FOR{episode $k=1, 2, ..$}
\STATE Construct confidence set $\Mc_k = \Phi(\Hc_{k1})$
\STATE Compute $\mu_k \in \argmax_{\mu, M \in \Mc_k} V^M_{\mu,1}$
\FOR{timestep $h=1,..,H$}
\STATE take action $a_{kh} = \mu_k(s_{kh}, h)$
\STATE update $H_{kh+1} = \Hc_{kh} \cup (s_{kh}, a_{kh}, r_{kh}, s_{kh+1})$
\ENDFOR
\ENDFOR
\end{algorithmic}
\end{algorithm}
\normalsize
}


{
\medmuskip=2mu
\thinmuskip=1mu
\thickmuskip=2mu
\begin{algorithm}[!h]
\caption{PSRL}
\textbf{Input:} prior distribution $\phi$
\begin{algorithmic}[1]
\label{alg:psrl}
\FOR{episode $k=1, 2, ..$}
\STATE Sample MDP $M_k \sim \phi(\cdot \mid \Hc_{k1})$
\STATE Compute $\mu_k \in \argmax_{\mu} V^{M_k}_{\mu,1}$
\FOR{timestep $h=1,..,H$}
\STATE take action $a_{kh} = \mu_k(s_{kh}, h)$
\STATE update $H_{kh+1} = \Hc_{kh} \cup (s_{kh}, a_{kh}, r_{kh}, s_{kh+1})$
\ENDFOR
\ENDFOR
\end{algorithmic}
\end{algorithm}
\normalsize
}

\vspace{-3mm}
\subsection{The blueprint for OFU regret bounds}
\label{sec: blueprint_ofu}

The general strategy for the analysis of optimistic algorithms follows a simple recipe \cite{strehl2005theoretical,szita2010model,munos2014bandits}:
\begin{enumerate}[leftmargin=*]
  \item Design confidence sets (via concentration inequality) such that $M^* \in \Mc_k $ for all $k$ with probability $\ge 1 - \delta$.

  \item Decompose the regret in each episode
  \begin{eqnarray*}
  \Delta_k = V^{M^*}_{\mu^*,1} - V^{M^*}_{\mu_k,1} = \underbrace{V^{M^*}_{\mu^*,1} - V^{M_k}_{\mu_k,1}}_{\Delta^{\rm opt}_k} + \underbrace{V^{M_k}_{\mu_k,1} - V^{M^*}_{\mu_k,1}}_{\Delta^{\rm conc}_k}
  \end{eqnarray*}
  where $M_k$ is the imagined optimistic MDP.

  \item By step (1.) $\Delta^{\rm opt}_k \le 0$ for all $k$ with probability $\ge 1 - \delta$.

  \item Use concentration results with a pigeonhole argument over all possible trajectories $\{\Hc_{11}, \Hc_{21}, ..\}$ to bound, with probability at least $1 - \delta$,
  {\small
  \medmuskip=1mu
\thinmuskip=0mu
\thickmuskip=1mu
  \begin{equation*}
    {\rm Regret}(T, \pi, M^*) \le \sum_{k=1}^{\lceil T / H \rceil} \Delta^{\rm conc}_k \mid M^* \in \Mc_k \le f(S,A,H,T,\delta).
  \end{equation*}}
\end{enumerate}

\subsection{Anything OFU can do, PSRL can expect to do too}
\label{sec: anything_psrl}

In this section, we highlight the connection between posterior sampling and any optimistic algorithm in the spirit of Section \ref{sec: blueprint_ofu}.
Central to our analysis will be the following notion of stochastic optimism \cite{osband2014generalization}.

\begin{definition}[Stochastic optimism]
\label{def: optimism}
\hspace{0.0001mm} \newline
Let $X$ and $Y$ be real-valued random variables with finite expectation.
We will say that $X$ is stochastically optimistic for $Y$ if for any convex and increasing $u:\Real \rightarrow \Real$:
\begin{equation}
\label{eq: optimism}
    \Exp\left[ u(X) \right] \ge \Exp\left[ u(Y) \right].
\end{equation}
We will write $X \so Y$ for this relation.
\end{definition}

This notion of optimism is dual to second order stochastic dominance \cite{hadar1969rules}, $X \so Y$ if and only if $-Y \ssd -X$.
We say that PSRL is a stochastically optimistic algorithm since the random \textit{imagined} value function $V^{M_k}_{\mu_k,1}$ is stochastically optimistic for the true optimal value function $V^{M^*}_{\mu^*, 1}$ conditioned upon \textit{any} possible history $\Hc_{k1}$ \cite{Russo2013}.
This observation leads us to a general relationship between PSRL and the BayesRegret of \textit{any} optimistic algorithm.

\vspace{2mm}
\begin{theorem}[PSRL matches OFU-RL in BayesRegret]
\label{thm: psrl best optimistic}
\hspace{0.000001mm} \newline
Let $\pi^{\rm opt}$ be any optimistic algorithm for reinforcement learning in the style of Algorithm \ref{alg:ofu_rl}.
If $\pi^{\rm opt}$ satisfies regret bounds such that, for any $M^*$ any $T > 0$ and any $\delta > 0$ the regret is bounded with probability at least $1 - \delta$
{\small
\begin{equation}
\label{eq: opt_regret_thm}
    {\rm Regret}(T, \pi^{\rm opt}, M^*) \le f(S,A,H,T, \delta).
\end{equation}
}
Then, if $\phi$ is the distribution of the true MDP $M^*$ and the proof of \eqref{eq: opt_regret_thm} follows Section \ref{sec: blueprint_ofu}, then for all $T > 0$
{\small
\begin{equation}
    {\rm BayesRegret}(T, \pi^{\rm PSRL}, \phi) \le
    2 f(S,A,H,T,{\delta \hspace{-1mm} = \hspace{-1mm} T^{-1}}) + 2.
\end{equation}
}
\end{theorem}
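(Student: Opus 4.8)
The plan is to re‑run the blueprint of Section~\ref{sec: blueprint_ofu} with the posterior sample $M_k\sim\phi(\cdot\mid\Hc_{k1})$ playing the role of the imagined optimistic MDP and with $\Mc_k=\Phi(\Hc_{k1})$ the very confidence sets used in the proof of \eqref{eq: opt_regret_thm}. Taking $\mu_k$ optimal for $M_k$ and writing $\Delta_k=\Delta^{\rm opt}_k+\Delta^{\rm conc}_k$ with $\Delta^{\rm opt}_k=\sum_{\Sc}\rho(s)\big(V^{M^*}_{\mu^*,1}(s)-V^{M_k}_{\mu_k,1}(s)\big)$ and $\Delta^{\rm conc}_k=\sum_{\Sc}\rho(s)\big(V^{M_k}_{\mu_k,1}(s)-V^{M^*}_{\mu_k,1}(s)\big)$, we have ${\rm BayesRegret}(T,\pi^{\rm PSRL},\phi)=\Exp[\sum_k\Delta^{\rm opt}_k]+\Exp[\sum_k\Delta^{\rm conc}_k]$, and the two sums are handled by completely different arguments.

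For the optimism term I would use the exact‑in‑expectation analogue of step~(3): $V^{M_k}_{\mu_k,1}(s)=\max_\mu V^{M_k}_{\mu,1}(s)$ is the optimal value of $M_k$, a fixed measurable function of $M_k$, and $V^{M^*}_{\mu^*,1}(s)$ is that same function of $M^*$. Since $\Mc_k$ is $\Hc_{k1}$‑measurable and, by the posterior‑sampling property, the conditional law of $M_k$ given $\Hc_{k1}$ equals that of $M^*$ given $\Hc_{k1}$, we get $\Exp[\Delta^{\rm opt}_k\mid\Hc_{k1}]=0$ for every $k$, hence $\Exp[\sum_k\Delta^{\rm opt}_k]=0$. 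This is the sharp form of the stochastic optimism of Section~\ref{sec: anything_psrl}, $V^{M_k}_{\mu_k,1}\so V^{M^*}_{\mu^*,1}$, with equality of means.

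For the concentration term I would exploit the hypothesis that the proof of \eqref{eq: opt_regret_thm} follows the blueprint: step~(4) bounds $\sum_k\Delta^{\rm conc}_k$ by $f(S,A,H,T,\delta)$ on an event determined only by the realised trajectory and by the memberships $\{M^*\in\Mc_k\}$, $\{M_k\in\Mc_k\}$ --- optimism of $M_k$ is never invoked, only that the executed policy is optimal for \emph{some} MDP in $\Mc_k$. Now $\{M^*\in\Mc_k\ \forall k\}$ has probability $\ge 1-\delta$ by the design of $\Phi$, and since $\Mc_k$ is $\Hc_{k1}$‑measurable with $M_k$ and $M^*$ equidistributed given $\Hc_{k1}$, each failure event $\{M_k\notin\Mc_k\}$ has the same conditional (hence marginal) probability as $\{M^*\notin\Mc_k\}$, so whatever device the original proof uses to aggregate these failures over episodes yields $\Prob(M_k\in\Mc_k\ \forall k)\ge 1-\delta$ as well. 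On the intersection $G$ of these events with the trajectory‑concentration event --- of probability $\ge 1-O(\delta)$ --- we get $\sum_k\Delta^{\rm conc}_k\le f(S,A,H,T,\delta)$, while on $G^c$ the crude bound $\sum_k\Delta^{\rm conc}_k\le\lceil T/H\rceil H\le T+H$ applies; taking expectations and setting $\delta=T^{-1}$ gives ${\rm BayesRegret}(T,\pi^{\rm PSRL},\phi)\le f(S,A,H,T,T^{-1})+(T+H)\,O(T^{-1})$, and careful bookkeeping of the $\delta$‑budget across the two confidence‑set events and the trajectory event accounts for the factor $2$ and the additive $2$ in the statement.

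The step I expect to be the main obstacle is step~(4): one has to verify that the targeted family of OFU proofs really does confine all use of optimism to step~(3), so that their estimate of $\sum_k\Delta^{\rm conc}_k$ is genuinely valid for the posterior sample $M_k$ and not just for the optimistic maximiser --- this is precisely the content of the hypothesis ``the proof of \eqref{eq: opt_regret_thm} follows Section~\ref{sec: blueprint_ofu}''. A secondary point needing care is transferring the confidence‑set guarantee from $M^*$ to $M_k$ simultaneously over all episodes: the per‑episode distributional identity handles each $k$ individually, but one must check that the aggregation over episodes used in the original proof (summable failure schedule, maximal inequality, $\Hc$‑adapted union bound, etc.) transfers, which it does exactly because the $M_k$‑ and $M^*$‑failure events have matching conditional probabilities at every episode.
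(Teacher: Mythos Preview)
Your proposal is correct and follows essentially the same route as the paper's sketch: posterior matching kills $\Delta^{\rm opt}_k$ in expectation, the same matching places $M_k\in\Mc_k$ with the same failure probability as $M^*$, and the OFU pigeonhole bound on $\sum_k\Delta^{\rm conc}_k$ transfers once both memberships hold. One small correction on the constants: in the paper the factor $2$ multiplying $f$ does \emph{not} come from the $\delta$-budget but from a triangle inequality through the OFU optimistic value $\tilde V_k$, namely $\Delta^{\rm conc}_k\le|V^{M_k}_{\mu_k,1}-\tilde V_k|+|\tilde V_k-V^{M^*}_{\mu_k,1}|$ with each piece handled by the original step~(4); the additive $+2$ is precisely $2\delta T$ from the union bound over $\{M^*\notin\Mc_k\}$ and $\{M_k\notin\Mc_k\}$ evaluated at $\delta=T^{-1}$.
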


\begin{hproof}
This result is established in \citet{Osband2013} for the special case of $\pi^{\rm opt} = \pi^{\rm UCRL2}$.
We include this small sketch as a refresher and a guide for high level intuition.
First, note that conditioned upon any data $\Hc_{k1}$, the true MDP $M^*$ and the sampled $M_k$ are identically distributed.
This means that $\Exp[\Delta^{\rm opt} | \Hc_{k1} ] \le 0$ for all $k$.
Therefore, to establish a bound upon the Bayesian regret of PSRL,  we just need to bound $\sum_{k=1}^{\lceil T / H \rceil} \Exp[ \Delta^{\rm conc}_k \mid \Hc_k ]$.

We can use that $M^* \mid \Hc_{k1} =^D M_k \mid \Hc_{k1}$ again in step (1.) from Section \ref{sec: blueprint_ofu} to say that \textit{both} $M^*, M_k$ lie within $\Mc_k$ for all $k$ with probability at least $1 - 2 \delta$ via a union bound.
This means we can bound the concentration error in PSRL,
{\small
\medmuskip=1mu
\thinmuskip=0mu
\thickmuskip=1mu
$$ {\rm BayesRegret}(T, \pi^{\rm PSRL}, \phi) \le \sum_{k=1}^{\lceil T/H\rceil} \Exp[\Delta^{\rm conc}_k \mid M^*, M_k \in \Mc_k ] + 2 \delta T$$
}
The final step follows from decomposing $\Delta^{\rm conc}_k$ by adding and subtracting the imagined optimistic value $\tilde{V}_k$ generated by $\pi^{\rm opt}$.
Through an application of the triangle inequality, $\Delta^{\rm conc}_k \le |V^{M_k}_{\mu_k,1} - \tilde{V}_k | + |\tilde{V}_k - V^{*}_{\mu_k,1}| $
we can mirror step (4.) to bound the regret from concentration,
\mbox{$\sum_{k=1}^{\lceil T/H\rceil} \Exp[\Delta^{\rm conc}_k \mid M^*, M_k \in \Mc_k ] \le 2f(S,A,H,T,\delta).$}
This result (and proof strategy) was established in multi-armed bandits by \citet{Russo2013}.
We complete the proof of Theorem \ref{thm: psrl best optimistic} with the choice $\delta = T^{-1}$ and that the regret is uniformly bounded by $T$.
\end{hproof}

\vspace{-4mm}

Theorem \ref{thm: psrl best optimistic} suggest that, according to Bayesian expected regret, PSRL performs within a factor of $2$ of any optimistic algorithm whose analysis follows Section \ref{sec: blueprint_ofu}.
This includes the algorithms UCRL2 \cite{Jaksch2010}, UCFH \cite{dann2015sample}, MORMAX \cite{szita2010model} and many more.

Importantly, and unlike existing OFU approaches, the algorithm performance is separated from the analysis of the confidence sets $\Mc_k$.
This means that PSRL even attains the big $O$ scaling of as-yet-undiscovered approaches to OFU, all at a computational cost no greater than solving a single known MDP - even if the matched OFU algorithm $\pi^{\rm opt}$ is computationally intractable.

\section{Some shortcomings of existing OFU-RL}
\label{sec: problem optimism}

In this section, we discuss how and why existing OFU algorithms forgo the level of statistical efficiency enjoyed by PSRL.
At a high level, this lack of statistical efficiency emerges from sub-optimal construction of the confidence sets $\Mc_k$.
We present several insights that may prove crucial to the design of improved algorithms for OFU.
More worryingly, we raise the question that perhaps the \textit{optimal} statistical confidence sets $\Mc_k$ would likely be computationally intractable.
We argue that PSRL offers a computationally tractable approximation to this unknown ``ideal'' optimistic algorithm.

Before we launch into a more mathematical argument it is useful to take intuition from a simple estimation problem, without any decision making.
Consider an MDP with $A=1, H=2, S=2N+1$ as described in Figure \ref{fig: bandit_s}.
Every episode the agent transitions from $s=0$ uniformly to $s \in \{1,..,2N\}$ and receives a deterministic reward from $\{0, 1\}$ depending upon this state.
The simplicity of these examples means even a naive monte-carlo estimate of the value should concentrate $1/2 \pm \tilde{O}(1 / \sqrt{n})$ after $n$ episodes of interaction.
Nonetheless, the confidence sets suggested by state of the art OFU-RL algorithm UCRL \citep{Jaksch2010} become incredibly mis-calibrated as $S$ grows.

\begin{figure}[h!]
\centering
  \includegraphics[width=.55\linewidth]{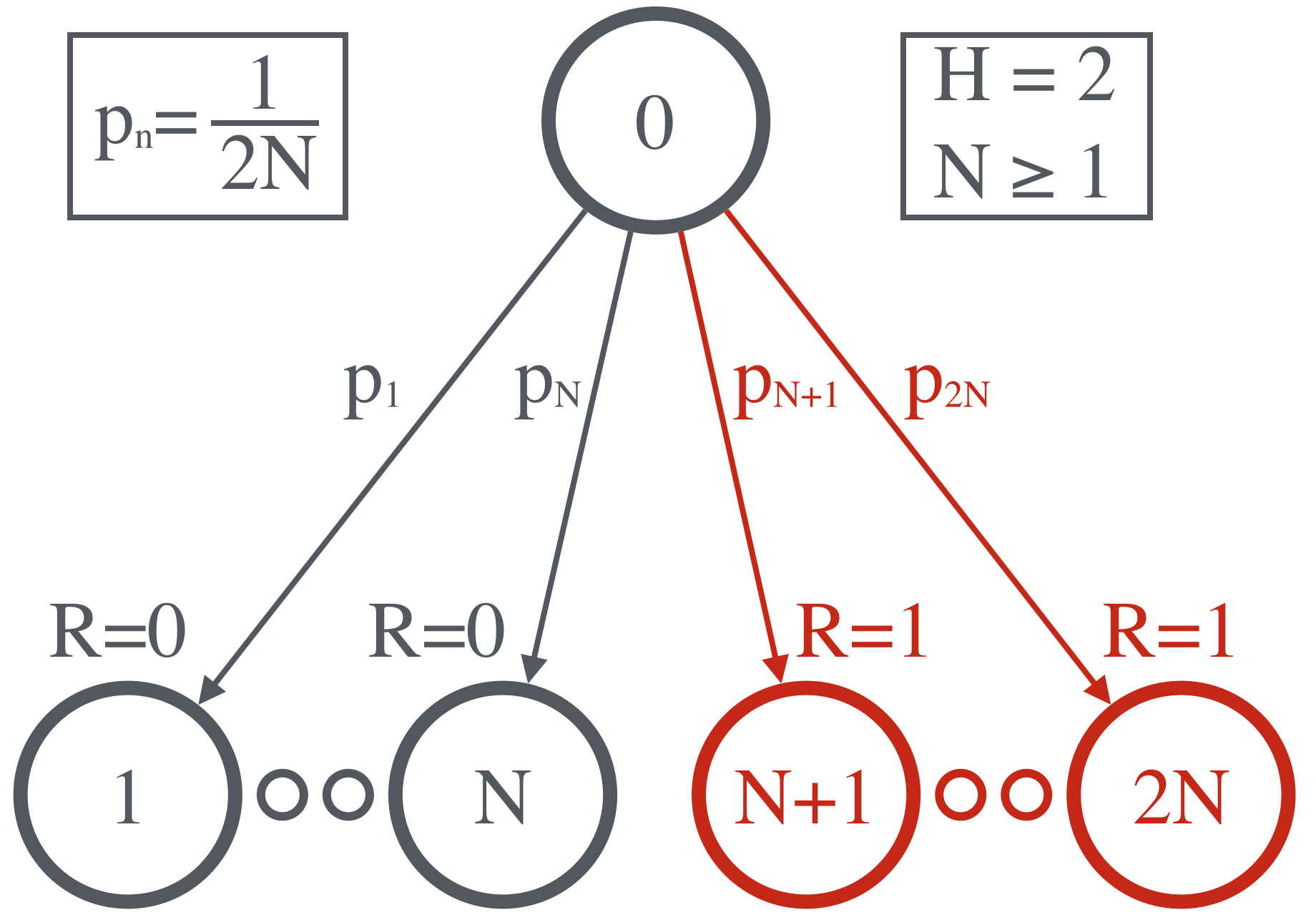}
\caption{MDPs to illustrate the scaling with $S$.}
\label{fig: bandit_s}
\end{figure}

\begin{figure}[h!]
\centering
  \includegraphics[width=.55\linewidth]{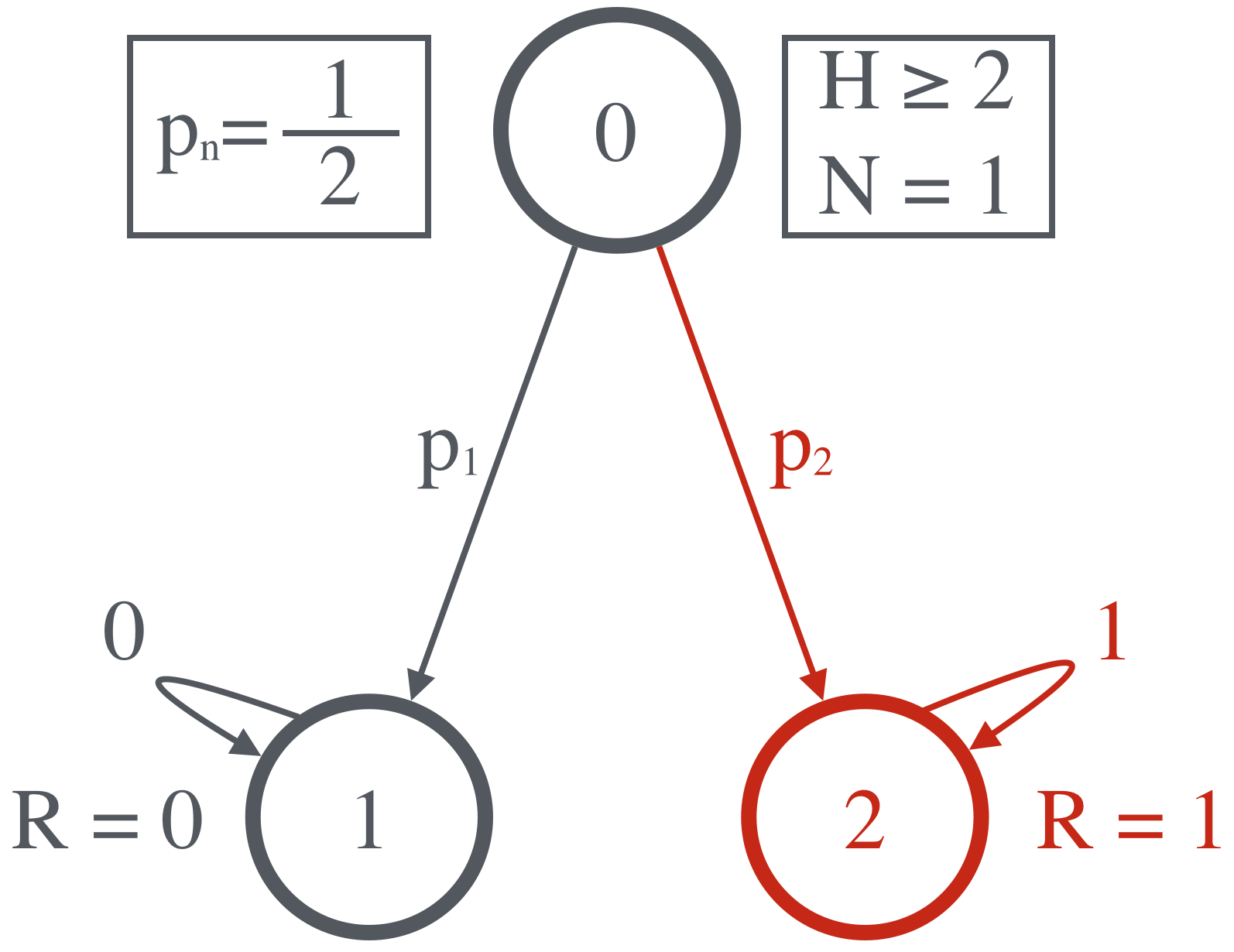}
\caption{MDPs to illustrate the scaling with $H$.}
\label{fig: bandit_h}
\end{figure}

\begin{figure}[h!]
\centering
  \includegraphics[width=.99\linewidth]{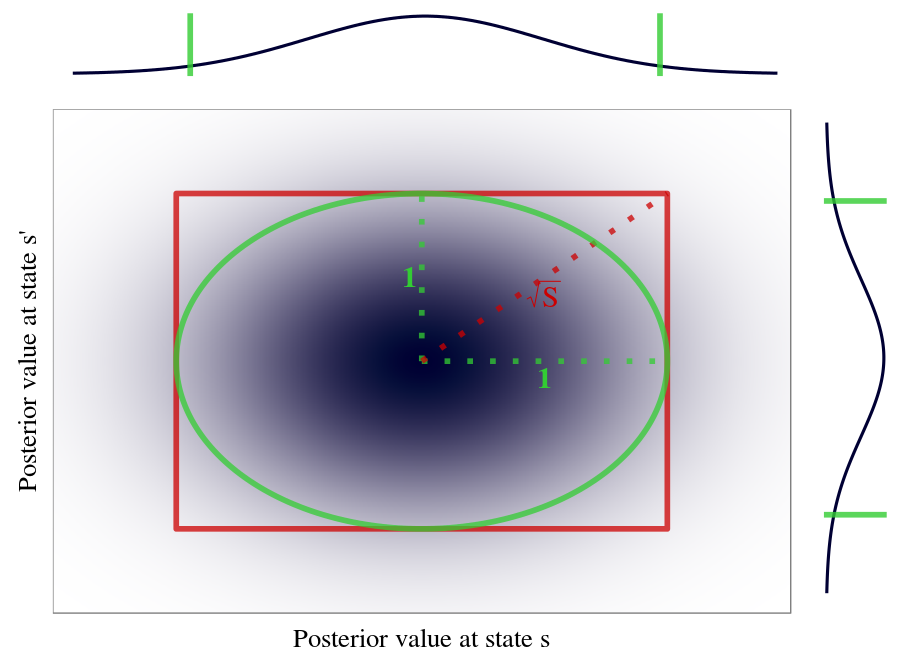}
\caption{\small Union bounds give loose rectangular confidence sets.}
\label{fig: rect conf}
\end{figure}

To see how this problem occurs, consider any algorithm for for model-based OFU-RL that builds up confidence sets for each state and action independently, such as UCRL.
Even if the estimates are tight in each state and action, the resulting optimistic MDP, simultaneously optimistic across each state and action, may be far too optimistic.
Geometrically these independent bounds form a rectangular confidence set.
The corners of this rectangle will be $\sqrt{S}$ misspecified to the underlying distribution, an ellipse, when combined across $S$ independent estimates (Figure \ref{fig: rect conf}).

Several algorithms for OFU-RL do exist which address this loose dependence upon $S$ \citep{Strehl2006,szita2010model}.
However, these algorithms depend upon a partitioning of data for future value, which leads to a poor dependence upon the horizon $H$ or equivalently the effective horizon $\frac{1}{1-\gamma}$ in discounted problems.
We can use a similar toy example from Figure \ref{fig: bandit_h} to understand why combining independently optimistic estimates through time will contribute to a loose bound in $H$.

The natural question to ask is, ``Why don't we simply apply these observations to design an optimistic algorithm which is simultaneously efficient in $S$ and $H$?''.
The first impediment is that designing such an algorithm requires some new intricate concentration inequalities and analysis.
Doing this rigorously may be challenging, but we believe it will be possible through a more careful application of existing tools to the insights we raise above.
The bigger challenge is that, even if one were able to formally specify such an algorithm, the resulting algorithm may in general not be computationally tractable.

A similar observation to this problem of optimistic optimization has been shown in the setting of linear bandits \cite{DaniHK2008,Russo2013}.
In these works they show that the problem of efficient optimization over ellipsoidal confidence sets can be NP-hard.
This means that computationally tractable implementations of OFU have to rely upon inefficient rectangular confidence sets that give up a factor of $\sqrt{D}$ where $D$ is the dimension of the underlying problem.
By contrast, Thompson sampling approaches remain computationally tractable (since they require solving only a single problem instance) and so do not suffer from the loose confidence set construction.
It remains an open question whether such an algorithm can be designed for finite MDPs.
However, these previous results in the simpler bandit setting $H=1$ show that these problems with OFU-RL cannot be overcome in general.

\vspace{-1mm}
\subsection{Computational illustration}
\label{sec: bandit_computation}
\vspace{-1mm}

In this section we present a simple series of computational results to demonstrate this looseness in both $S$ and $H$.
We sample $K=1000$ episodes of data from the MDP and then examine the optimistic/sampled Q-values for UCRL2 and PSRL.
We implement a version of UCRL2 optimized for finite horizon MDPs and implement PSRL with a uniform Dirichlet prior over the initial dynamics $P(0,1)=(p_1,..,p_{2N})$ and a $N(0,1)$ prior over rewards updating as if rewards had $N(0,1)$ noise.
For both algorithms, if we say that $R$ or $P$ are \textit{known} then we mean that we use the true $R$ or $P$ inside UCRL2 or PSRL.
In each experiment, the estimates guided by OFU become extremely mis-calibrated, while PSRL remains stable.

The results of Figure \ref{fig: bandit_knownP} are particularly revealing.
They demonstrates the potential pitfalls of OFU-RL even when the underlying transition dynamics \textit{entirely known}.
Several OFU algorithms have been proposed to remedy the loose UCRL-style L1 concentration from transitions \citep{filippi2010optimism,araya2012near,dann2015sample} but none of these address the inefficiency from hyper-rectangular confidence sets.
As expected, these loose confidence sets lead to extremely poor performance in terms of the regret.
We push full results to Appendix \ref{app: estimation_experiments} along with comparison to several other OFU approaches.

\begin{figure}[h!]
\centering
  \includegraphics[width=.75\linewidth, height=1.45in]{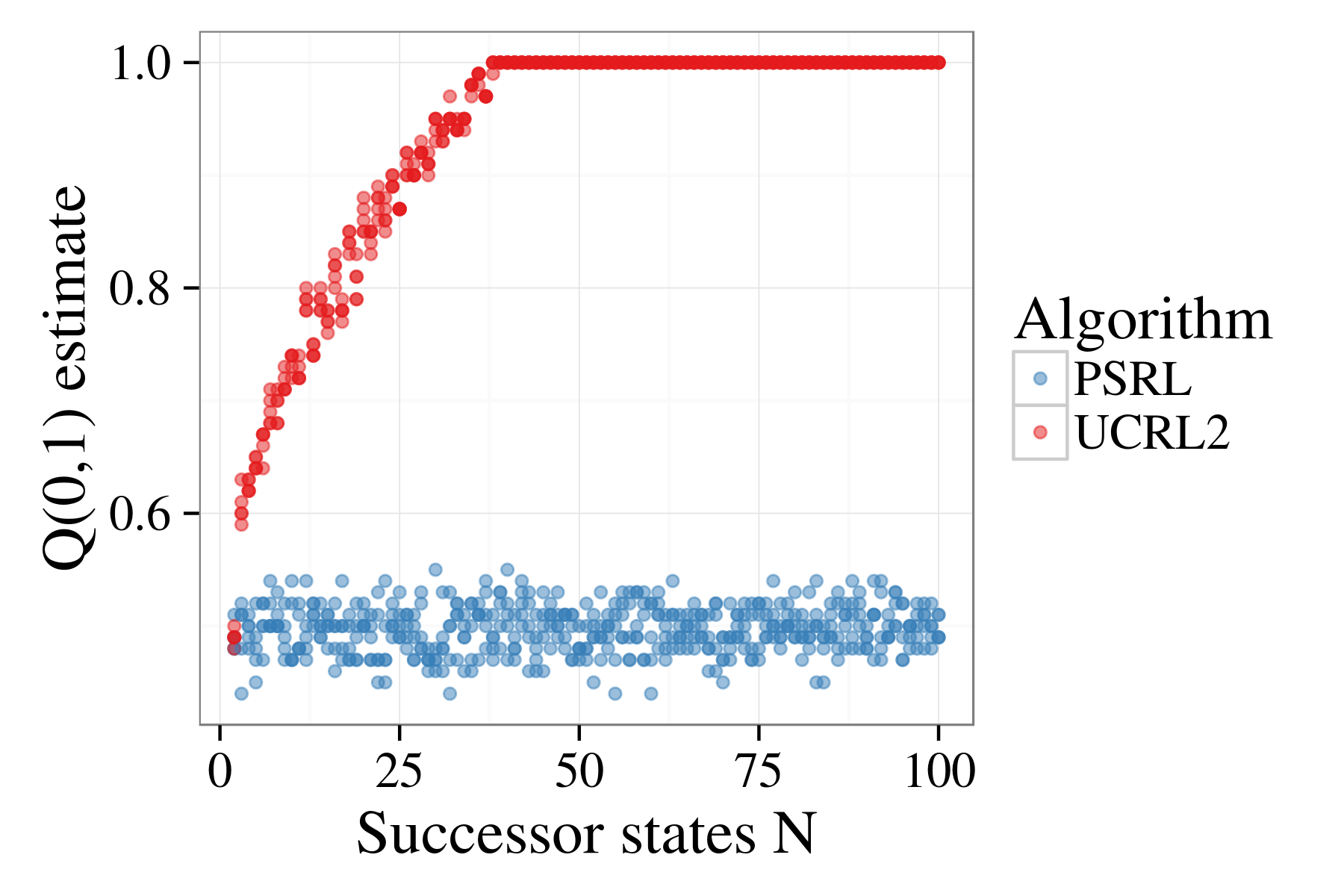}
  \vspace{-5mm}
\caption{$R$ known, $P$ unknown, vary $N$ in the MDP Figure \ref{fig: bandit_s}.}
\label{fig: bandit_knownR}
\end{figure}

\begin{figure}[h!]
\centering
  \includegraphics[width=.75\linewidth, height=1.45in]{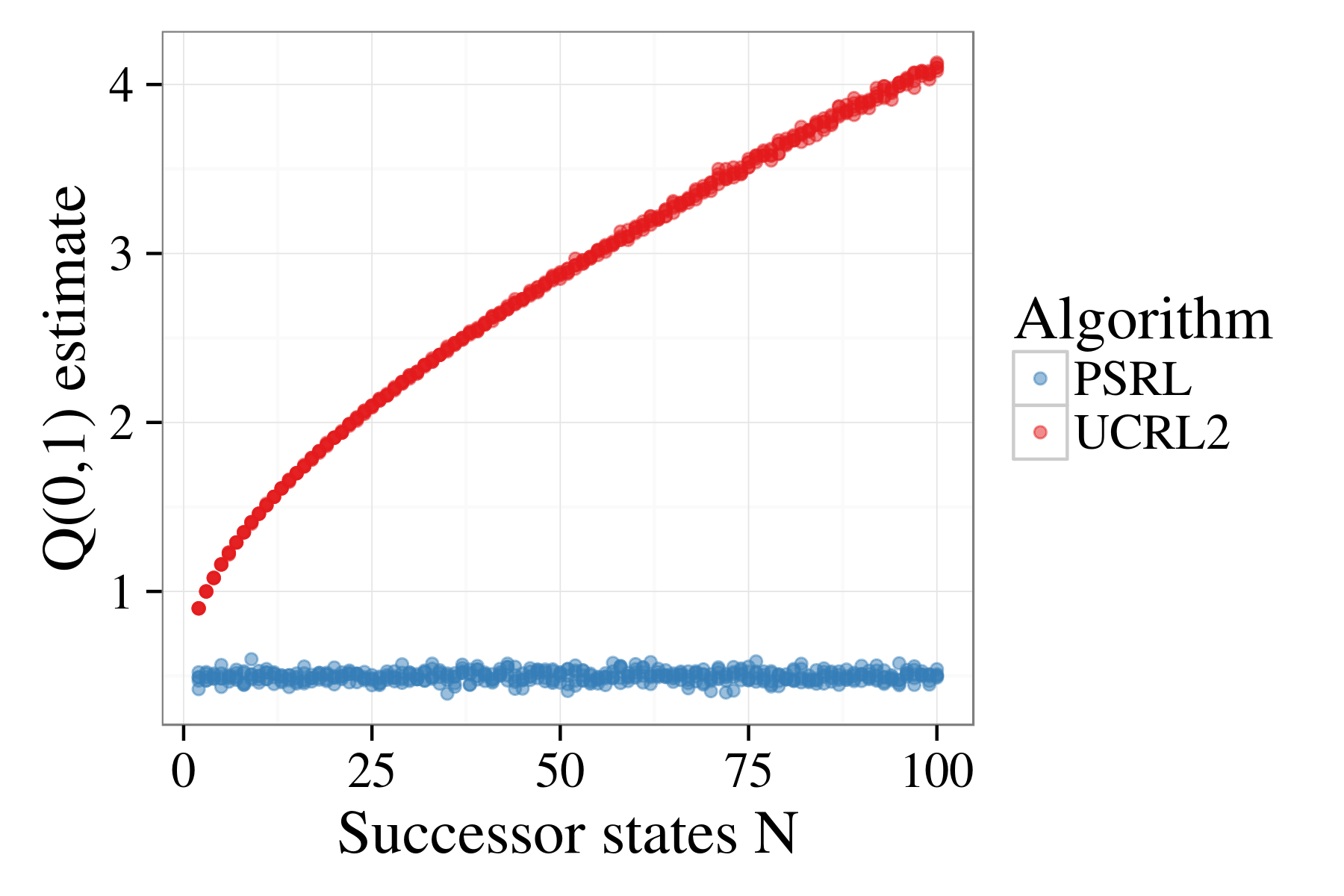}
  \vspace{-5mm}
\caption{$P$ known, $R$ unknown, vary $N$ in the MDP Figure \ref{fig: bandit_s}.}
\label{fig: bandit_knownP}
\end{figure}

\begin{figure}[h!]
\centering
  \includegraphics[width=.75\linewidth, height=1.45in]{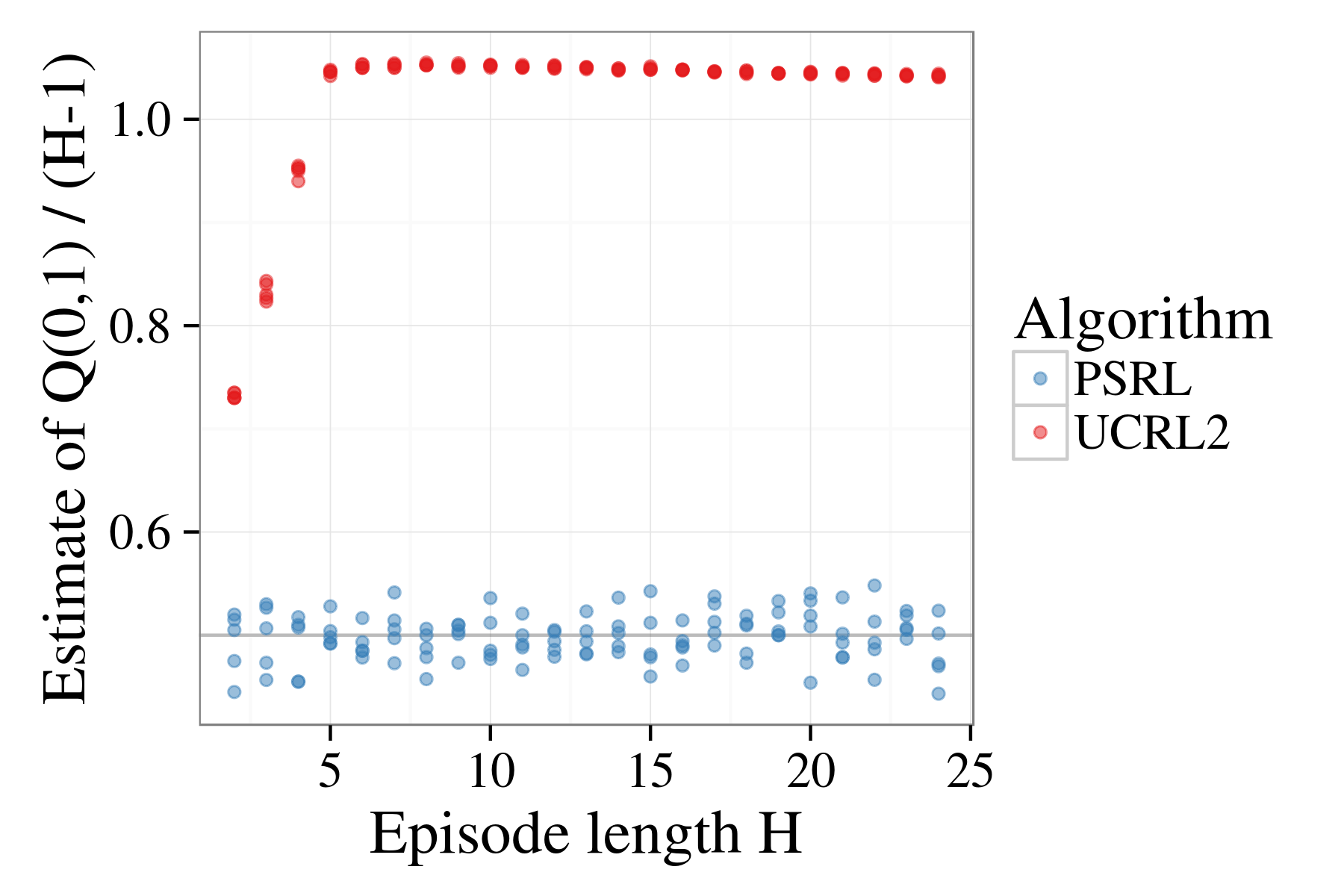}
  \vspace{-5mm}
\caption{ $R, P$ unknown, vary $H$ in the MDP Figure \ref{fig: bandit_h}}
\label{fig: bandit_epLen}
\end{figure}

\section{Better optimism by sampling}
\label{sec: stochastic optimism}

Until now, all analyses of PSRL have come via comparison to some existing algorithm for OFU-RL.
Previous work, in the spirit of Theorem \ref{thm: psrl best optimistic}, leveraged the existing analysis for UCRL2 to establish an $\tilde{O}(HS \sqrt{AT})$ bound upon the Bayesian regret \cite{Osband2013}.
In this section, we present a new result that bounds the expected regret of PSRL $\tilde{O}(H \sqrt{SAT})$.
We also include a conjecture that improved analysis could result in a Bayesian regret bound $\tilde{O}(\sqrt{HSAT})$ for PSRL, and that this result would be unimprovable \citep{osband2016lower}.

\subsection{From $S$ to $\sqrt{S}$}

In this section we present a new analysis that improves the bound on the Bayesian regret from $S$ to $\sqrt{S}$.
The proof of this result is somewhat technical, but the essential argument comes from the simple observation of the loose rectangular confidence sets from Section \ref{sec: problem optimism}.
The key to this analysis is a technical lemma on Gaussian-Dirichlet concentration \cite{osband2017gaussian}.
\begin{theorem}
\label{thm: psrl tight}
Let $M^*$ be the true MDP distributed according to prior $\phi$ with any independent Dirichlet prior over transitions.
Then the regret for PSRL is bounded
\begin{equation}
    {\rm BayesRegret}(T, \pi^{\rm PSRL}, \phi) = \tilde{O}\left(H \sqrt{SAT}\right).
\end{equation}
\end{theorem}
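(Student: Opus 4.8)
The plan is to follow the standard OFU regret-decomposition blueprint of Section \ref{sec: blueprint_ofu}, but instead of pairing PSRL against a fixed rectangular-confidence-set algorithm, to bound the posterior-sampling concentration term $\Delta^{\rm conc}_k = V^{M_k}_{\mu_k,1} - V^{M^*}_{\mu_k,1}$ directly, using the fact that $M^*$ and $M_k$ are identically distributed given $\Hc_{k1}$. First I would use the Bellman/value-difference identity to write $\Delta^{\rm conc}_k$ as a sum over the $H$ periods and over state--action pairs visited under $\mu_k$ in $M^*$ of the one-step discrepancies: the reward-estimate error $\overline{r}^{M_k}(s,a)-\overline{r}^{M^*}(s,a)$ plus the transition-estimate error $\bigl(P^{M_k}(s,a)-P^{M^*}(s,a)\bigr)^\top V^{M_k}_{\mu_k,h+1}$. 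The reward term contributes only a lower-order $\tilde O(\sqrt{SAT})$ piece, so the crux is the transition term. Conditioned on $\Hc_{k1}$, the number of prior visits $n_k(s,a)$ to each $(s,a)$ is known, the posterior over $P^{M^*}(s,a)$ is Dirichlet with those counts, and $P^{M_k}(s,a)$ is an independent draw from the same Dirichlet.

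The key step is to control $\Exp\bigl[\bigl(P^{M_k}(s,a)-P^{M^*}(s,a)\bigr)^\top V \mid \Hc_{k1}\bigr]$ where $V = V^{M_k}_{\mu_k,h+1}\in[0,H]^S$. This is exactly where the Gaussian--Dirichlet concentration lemma of \citet{osband2017gaussian} enters: a Dirichlet-distributed vector with concentration $n$, when integrated against a fixed bounded vector, is stochastically dominated (in the $\so$/second-order sense) by a Gaussian of variance $O(\mathrm{Var}(V)/n)$, and crucially the relevant ``dimension'' penalty is $\sqrt{S}$ rather than $S$ because we are looking at a scalar functional $P^\top V$ rather than the full $\ell_1$ ball. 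Using stochastic optimism together with a conditional-variance bound $\mathrm{Var}_{P^{M^*}(s,a)}(V) \le \Exp[\,\cdot\,]$ of the sampled value, I would obtain a per-$(s,a,h)$ bound of the form $\tilde O\bigl(\sqrt{S}\cdot\sqrt{\mathrm{Var}(V)}/\sqrt{n_k(s,a)}\bigr)$, which after a Cauchy--Schwarz split separates a $\sqrt{S}$ factor from a $\sqrt{H}\cdot(\text{value magnitude})$ factor.

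Next I would sum over episodes and periods. Summing $1/\sqrt{n_k(s,a)}$ over the $T/H$ episodes and over all visited $(s,a)$ gives the usual pigeonhole bound $\sum_k \sum_{s,a} \Ind[(s,a)\text{ visited}]/\sqrt{n_k(s,a)} = \tilde O(\sqrt{SAT/H})$; combined with the $\sqrt{S}$ from the Gaussian--Dirichlet step, the extra $H$ from the value range, and the $H$ periods inside each episode (one factor of $H$ is absorbed by the effective-horizon bookkeeping while another comes from $V\le H$), the bound collapses to $\tilde O(H\sqrt{SAT})$. I would handle the low-probability event that the empirical counts deviate wildly from their posterior-expected behaviour by a standard truncation, contributing only an additive $\tilde O(1)$ after the $\delta = 1/T$ choice and the trivial $\mathrm{Regret}\le T$ bound, exactly as in the sketch of Theorem \ref{thm: psrl best optimistic}.

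The main obstacle I anticipate is making the variance argument rigorous and getting the powers of $H$ to come out right. The naive bound $\mathrm{Var}_{P^{M^*}(s,a)}(V^{M_k}_{\mu_k,h+1}) \le H^2$ would lose a factor of $\sqrt{H}$ and yield only $\tilde O(H^{3/2}\sqrt{SAT})$; one needs a ``total variance''/law-of-total-variance argument (in the spirit of the Azar--Munos--Kappen analysis) showing that the sum over a trajectory of the per-step conditional variances of the value is $\tilde O(H^2)$ rather than $H^3$, so that after Cauchy--Schwarz the trajectory contributes only $H$. Marrying this variance bookkeeping with the stochastic-optimism machinery — i.e., transferring a second-order-stochastic-dominance statement about each Dirichlet coordinate into a statement about the accumulated value variance along sampled trajectories, while the sampled MDP $M_k$ (hence $\mu_k$ and $V^{M_k}_{\mu_k,\cdot}$) is itself random and correlated with nothing in the true environment given $\Hc_{k1}$ — is the delicate part, and is presumably where the ``somewhat technical'' proof and the Gaussian--Dirichlet lemma of \citet{osband2017gaussian} do their real work.
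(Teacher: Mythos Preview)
Your high-level decomposition matches the paper: expand $\Delta^{\rm conc}_k$ via the Bellman recursion into per-step reward and transition errors, control the transition term with the Gaussian--Dirichlet lemma, then pigeonhole. But you have mis-stated the key quantitative content of that lemma. Lemma~\ref{lem:Gaussian-Dirichlet} says that for $P\sim{\rm Dirichlet}(\alpha)$ and fixed $V\in[0,H]^S$, the scalar $P^\top V$ is dominated by a one-dimensional Gaussian with variance $H^2/\alpha^\top\Ind$ --- there is \emph{no} dimension factor in the per-step bound. The paper's Lemma~\ref{lem: transition conc psrl} gives $|w^P_h(x_{kh})|\le \tilde O\bigl(H/\sqrt{n_k(x_{kh})}\bigr)$, and the $\sqrt{S}$ in the final $H\sqrt{SAT}$ arises \emph{only} from the pigeonhole step $\sum_{t=1}^T 1/\sqrt{n_t}\le 2\sqrt{SAT}$. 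Your claim that the scalar functional carries a ``$\sqrt{S}$ rather than $S$'' penalty understates the lemma: the per-step penalty is $1$, not $\sqrt{S}$.

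Because you inserted an extra $\sqrt{S}$ per step, you are then forced to look for a compensating $\sqrt{H}$ saving via a law-of-total-variance argument. The paper does \emph{not} do this for Theorem~\ref{thm: psrl tight}: it uses the crude range bound $V\in[0,H]^S$, gets $H/\sqrt{n}$ per step, sums the $H$ steps in each episode, and pigeonholes directly to $H\sqrt{SAT}$. The total-variance refinement you describe is precisely what the paper leaves open as Conjecture~\ref{conjecture: PSRL regret} (the further improvement to $\sqrt{HSAT}$), not part of the theorem's proof; and with your spurious per-step $\sqrt{S}$ retained, the arithmetic does not actually collapse to $H\sqrt{SAT}$ even granting the variance trick. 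One further technical point: the paper centers at the posterior mean $\hat P_k$ rather than at $P^*$, writing $w^P_h=(P_k-\hat P_k)^\top V^k_{kh+1}$, so that the $P^*$ contribution vanishes in expectation. The subtlety it \emph{does} have to address --- which you only gesture at in your last paragraph --- is that $V^k_{kh+1}$ depends on the sampled $P_k$ and is not a fixed vector; Lemma~\ref{lem: transition conc psrl} handles this by taking a worst case over $V\in[0,H]^S$ and reducing to a Beta marginal, still without incurring any $S$-dependence.
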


Our proof of Theorem \ref{thm: psrl tight} mirrors the standard OFU-RL analysis from Section \ref{sec: blueprint_ofu}.
To condense our notation we write
{\medmuskip=0mu
\thinmuskip=0mu
\thickmuskip=0mu
\mbox{$x_{kh} := (s_{kh}, a_{kh})$} and $V^k_{k,h} := V^{M_k}_{\mu_k,h}$.
Let the posterior mean of rewards $\hat{r}_k(x) := \Exp[ \overline{r}^*(x) | \Hc_{k1}]$, transitions $ \hat{P}_k(x) := \Exp[ P^*(x) | \Hc_{k1}]$ with respective deviations from sampling noise
\mbox{$w^R(x) := \overline{r}_k(x) - \hat{r}_k(x)$}
and \mbox{$w^P_h(x) := (P_k(x) - \hat{P}_k(x))^T V^k_{kh+1}$}.
}

We note that, conditional upon the data $\Hc_{k1}$ the true reward and transitions are independent of the rewards and transitions sampled by PSRL, so that $\Exp[\overline{r}^*(x) | \Hc_{k1}] = \hat{r}_k(x), \Exp[P^*(x) | \Hc_{k1}] = \hat{P}_k(x)$ for any $x$.
However, $\Exp[w^R(x) | \Hc_{k1}]$ and $\Exp[w^P_h(x) | \Hc_{k1}]$ are generally non-zero, since the agent chooses its policy to optimize its reward under $M_k$.
We can rewrite the regret from concentration via the Bellman operator (section 5.2 of \citet{Osband2013}),
{\small
\medmuskip=0mu
\thinmuskip=0mu
\thickmuskip=0mu
\begin{eqnarray}
\label{eq: regret_decomp}
    && \Exp\left[ V^k_{k1} - V^*_{k1} \mid \Hc_{k1}\right] \nonumber \\
    &=& \Exp\left[(\overline{r}_k - \overline{r}^*)(x_{k1}) + P_k(x_{k1})^T V^k_{k2} - P^*(x_{k1})^T V^*_{k2} \ \mid \ \Hc_{k1} \right] \nonumber \\
    &=& \Exp\bigg[(\overline{r}_k - \overline{r}^*)(x_{k1}) + \left(P_k(x_{k1}) - \hat{P}_k(x_{k1})\right)^T V^k_{k2} \nonumber \\
    &&+ \ \Exp \left[\left(V^k_{k2} - V^*_{k2}\right)(s') \mid s' \sim P^*(x_{k1})\right] \ \mid \Hc_{k1} \bigg] \nonumber \\
    &=& ... \nonumber \\
    &=& \Exp\bigg[ \textstyle \sum_{h=1}^H \left\{\overline{r}_k(x_{k1}) - \hat{r}^*(x_{k1}) \right\} \nonumber \\
    && + \textstyle \sum_{h=1}^H \left\{ \left(P_k(x_{kh}) - \hat{P}_k(x_{kh})\right)^T V^k_{kh} \right\} \ \mid \ \Hc_{k1} \bigg] \nonumber \\
    &\le& \Exp\left[ \textstyle \sum_{h=1}^H | w^R(x_{kh}) | + \textstyle \sum_{h=1}^H | w^P_h(x_{kh}) | \ \mid \ \Hc_{k1} \right].
\end{eqnarray}
}

\vspace{-4mm}

We can bound the contribution from unknown rewards $w^R_k(x_{kh})$ with a standard argument from earlier work \citep{buldygin1980sub,Jaksch2010}.

\begin{lemma}[Sub-Gaussian tail bounds]
\label{lem: tail}
\hspace{0.000000001mm} \newline
Let $x_1, .., x_n$ be independent samples from sub-Gaussian random variables.
Then, for any $\delta >0$
\begin{equation}
\label{eq: tail_2}
    \Prob \left( \frac{1}{n} \big| \sum_{i=1}^n x_i \big| \ge \sqrt{\frac{2 \log(2 / \delta)}{n}} \right)
        \le \delta.
\end{equation}
\end{lemma}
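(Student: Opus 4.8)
The plan is to establish the standard sub-Gaussian concentration inequality stated in Lemma~\ref{lem: tail}. Recall that a mean-zero random variable $x$ is sub-Gaussian (with parameter $1$, say) if $\Exp[e^{\lambda x}] \le e{\lambda^2/2}$ for all $\lambda \in \Real$. First I would reduce the two-sided bound to a one-sided bound: since $\Prob(|\sum_i x_i| \ge t) \le \Prob(\sum_i x_i \ge t) + \Prob(-\sum_i x_i \ge t)$, and since $-x_i$ is sub-Gaussian whenever $x_i$ is, it suffices to prove $\Prob(\sum_{i=1}^n x_i \ge nt) \le \tfrac{\delta}{2}$ for $t = \sqrt{2\log(2/\delta)/n}$, then double the failure probability.

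The core step is the Chernoff bound. For any $\lambda > 0$,
\begin{equation*}
  \Prob\!\left(\textstyle\sum_{i=1}^n x_i \ge n t\right)
  \le e{-\lambda n t}\, \Exp\!\left[e{\lambda \sum_{i=1}^n x_i}\right]
  = e{-\lambda n t}\prod_{i=1}^n \Exp\!\left[e{\lambda x_i}\right]
  \le e{-\lambda n t}\, e{n \lambda^2/2},
\end{equation*}
where the equality uses independence of the $x_i$ and the last inequality uses the sub-Gaussian moment generating function bound. Then I would optimize over $\lambda$: the exponent $-\lambda n t + n\lambda^2/2$ is minimized at $\lambda = t$, giving a bound of $e{-n t^2/2}$. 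Substituting $t = \sqrt{2\log(2/\delta)/n}$ yields $e{-n \cdot (2\log(2/\delta)/n)/2} = e{-\log(2/\delta)} = \delta/2$. Combining with the two-sided reduction gives the claimed probability $\le \delta$, completing the proof.

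There is no real obstacle here — this is a textbook Chernoff-bound argument, which is precisely why the statement is invoked as a lemma with a citation to \citet{buldygin1980sub,Jaksch2010}. The only points requiring a word of care are (i) making the notion of ``sub-Gaussian'' precise, including which variance proxy is intended (the statement implicitly normalizes it to $1$; for a general proxy $\sigma2$ the bound reads $\sqrt{2\sigma2\log(2/\delta)/n}$), and (ii) noting that the sub-Gaussian class is closed under negation so the two-sided bound follows with only a factor-of-two loss in $\delta$. Both are routine, so I would present the proof compactly in a few lines.
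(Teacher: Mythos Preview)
Your argument is correct and is precisely the standard Chernoff--Cram\'er bound for sums of independent sub-Gaussian random variables. Note, however, that the paper does not actually give its own proof of this lemma: it is stated with a citation to \citet{buldygin1980sub,Jaksch2010} and then used as a black box. So there is nothing to compare against beyond observing that your derivation is exactly the textbook argument those references supply.

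One minor remark on presentation: several of your exponentials are missing carets (you write \texttt{e\{...\}} where you mean \texttt{e\textasciicircum\{...\}}), and similarly $\sigma2$ should be $\sigma^2$; fix these before inclusion or the typeset version will be garbled. Mathematically, though, the reduction to a one-sided bound, the MGF product via independence, the optimization at $\lambda=t$, and the substitution $t=\sqrt{2\log(2/\delta)/n}$ are all correct, and your caveat about the implicit variance proxy being $1$ is the right clarification to add.
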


The key piece of our new analysis will be to show that the contribution from the transition estimate $\sum_{h=1}^H | w^P(x_{kh}) |$ concentrates at a rate independent of $S$.
At the root of our argument is the notion of stochastic optimism \citep{osband2016thesis}, which introduces a partial ordering over random variables.
We make particular use of Lemma \ref{lem:Gaussian-Dirichlet}, that relates the concentration of a Dirichlet posterior with that of a matched Gaussian distribution \cite{osband2017gaussian}.

\begin{lemma}[Gaussian-Dirichlet dominance]
\label{lem:Gaussian-Dirichlet}
\hspace{0.001mm} \newline
For all fixed $V \in [0,1]^N$, $\alpha \in [0,\infty)^N$ with \mbox{$\alpha^T \Ind \ge 2$},
if $X \sim N(\alpha^\top V / \alpha^\top {\bf 1}, 1/ \alpha^\top {\bf 1})$
and $Y = P^T V$ for \mbox{$P \sim {\rm Dirichlet}(\alpha)$} then $X \so Y$.
\end{lemma}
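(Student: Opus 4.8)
\begin{hproof}
The plan is to reduce the claim to a one-dimensional comparison of a Beta law against a matched Gaussian, and then to settle that comparison by a sign-change argument.

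\textbf{Step 1: reduce $\so$ to the convex order.} A Dirichlet$(\alpha)$ vector has mean $\alpha/\alpha^\top\Ind$, so $\Exp[X]=\alpha^\top V/\alpha^\top\Ind=\Exp[Y]$. By Definition~\ref{def: optimism}, $X\so Y$ asks $\Exp[u(X)]\ge\Exp[u(Y)]$ for every increasing convex $u$; since the means agree this is equivalent to the two-sided convex order $Y\le_{\mathrm{cx}}X$, because for any convex $u$ the map $u(x)+cx$ is increasing and convex for $c$ large while the linear term contributes nothing (convex $u$ of unbounded slope are recovered by a routine truncation). So it suffices to prove $Y\le_{\mathrm{cx}}X$.

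\textbf{Step 2: reduce to a Beta.} I would peel $V$ down to $\{0,1\}$-valued coordinates in three moves, each harmless for $\le_{\mathrm{cx}}$ and none changing $\alpha^\top\Ind$. (i) $X\so Y$ is preserved by a common increasing affine map, and a translation also preserves the matched Gaussian, so translate to $\min_iV_i=0$. (ii) If then $\beta:=\max_iV_i<1$, run the argument for $V/\beta$ and scale back by $\beta$: this yields $N(\cdot,\beta^2/\alpha^\top\Ind)\so Y$, and $X=N(\cdot,1/\alpha^\top\Ind)$ has the same mean and larger variance, hence $X\so N(\cdot,\beta^2/\alpha^\top\Ind)\so Y$; so WLOG $\max_iV_i=1$. (iii) Replace each coordinate with $V_i=v\in(0,1)$ by two coordinates of values $0,1$ and Dirichlet parameters $(1-v)\alpha_i,\,v\alpha_i$; by the Dirichlet aggregation property this leaves the joint law of everything else intact while replacing the deterministic term $vP_i$ by $P_i$ times an independent mean-$v$ Beta factor, i.e. it is a mean-preserving spread of $Y$ (Jensen), so it can only increase $Y$ in $\le_{\mathrm{cx}}$. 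Merging all $V_i=0$ and all $V_i=1$ coordinates, $Y$ becomes $p\sim\mathrm{Beta}(a,b)$ with $a+b=\alpha^\top\Ind\ge 2$ and the matched Gaussian becomes $N(\tfrac{a}{a+b},\tfrac{1}{a+b})$. (A short computation gives $\mathrm{Var}(Y)\le\tfrac{1}{4(\alpha^\top\Ind+1)}\le\tfrac{1}{\alpha^\top\Ind}=\mathrm{Var}(X)$, so the target Gaussian is never under-dispersed.)

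\textbf{Step 3: the Beta-versus-Gaussian core, which is the main obstacle.} It remains to show $\mathrm{Beta}(a,b)\le_{\mathrm{cx}}N(\bar p,1/(a+b))$ with $\bar p=a/(a+b)$ and $a+b\ge2$. The laws have equal mean, so by Karlin's sign-change (cut) criterion it is enough that $f_N-f_{\mathrm{Beta}}$ change sign exactly twice on $\Real$ in the order $+,-,+$. Outside $[0,1]$ this difference equals $f_N>0$, so the content lives on $(0,1)$, where one studies $h:=\log f_N-\log f_{\mathrm{Beta}}$, with $h(x)=\mathrm{const}-\tfrac{(x-\bar p)^2}{2\sigma^2}-(a-1)\log x-(b-1)\log(1-x)$ and $h''(x)=-\tfrac{1}{\sigma^2}+\tfrac{a-1}{x^2}+\tfrac{b-1}{(1-x)^2}$, and shows $h$ has at most two zeros in $(0,1)$ with the correct pattern. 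This sign-count, uniform over $\{a+b\ge2\}$, is the crux: the hypothesis $\alpha^\top\Ind=a+b\ge2$ enters precisely here, forbidding $a<1$ and $b<1$ at once (so $\mathrm{Beta}$ is not U-shaped, which would manufacture extra crossings) and controlling $(a-1)/x^2+(b-1)/(1-x)^2$ against $1/\sigma^2=a+b$; the cleanest instance is $a=b=1$, where $\mathrm{Beta}(1,1)$ is Uniform and $h$ is a strictly negative concave parabola on $(0,1)$. If one only needs the Chernoff/tail bounds with which $\sum_h|w^P_h(x_{kh})|$ is controlled in \eqref{eq: regret_decomp}, the weaker statement $\Exp[e^{\lambda Y}]\le\Exp[e^{\lambda X}]$ for $\lambda\ge0$ suffices and follows from known sub-Gaussianity of the Beta/Dirichlet with variance proxy $\tfrac{1}{4(\alpha^\top\Ind+1)}\le1/\alpha^\top\Ind$; a coupling route (Strassen: build a martingale coupling $\Exp[X\mid Y]=Y$) is an alternative but is itself delicate between a Beta and a Gaussian.
\end{hproof}
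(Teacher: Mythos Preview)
The paper does not actually prove Lemma~\ref{lem:Gaussian-Dirichlet}; it imports the result from the companion note \citet{osband2017gaussian}. What the paper \emph{does} prove in Appendix~\ref{app: proof_trans} is the Dirichlet-to-Beta reduction (Lemma~\ref{lem: Dir Beta}), and then it invokes Lemma~\ref{lem:Gaussian-Dirichlet} only for the resulting two-point case. So there is no full in-paper proof to match against; the relevant comparison is between your Step~2 and the paper's Lemma~\ref{lem: Dir Beta}.

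On that comparison: your Step~2 and the paper's Lemma~\ref{lem: Dir Beta} are the same construction viewed two ways. The paper represents $P$ via independent Gammas, splits each $\gamma_i$ into two pieces with shapes $\alpha_i(v_i-v_1)/(v_d-v_1)$ and $\alpha_i(v_d-v_i)/(v_d-v_1)$, and reads off a Beta that is a mean-preserving spread of $P^\top V$. Your move~(iii) does the equivalent thing via Dirichlet disaggregation (split the $i$th mass into parameters $(1-v_i)\alpha_i,\,v_i\alpha_i$ carrying values $0,1$), then aggregates; both give the same Beta$(\alpha^\top V,\ \alpha^\top\Ind-\alpha^\top V)$ and the same $\Exp[\tilde X\mid X]=X$ coupling. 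Your preliminary affine normalizations~(i)--(ii) are a tidy addition that let you work directly with $V\in\{0,1\}^N$ and show why a Gaussian with the \emph{larger} variance $1/\alpha^\top\Ind$ suffices even when $\max V-\min V<1$.

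The residual gap is exactly where you flag it: Step~3 is only a plan. The cut-criterion route is the right one and is how the companion note proceeds, but your sign-count argument is not yet tight. In particular, $a+b\ge 2$ does \emph{not} force $a\ge 1$ and $b\ge 1$; with $a<1$ one has $f_{\rm Beta}(x)\to\infty$ as $x\downarrow 0$, so $f_N-f_{\rm Beta}$ is already negative just right of $0$ while positive just left of $0$, and your $h''$ formula then has a negative term $(a-1)/x^2$. The two-crossing conclusion still holds, but it needs a more careful case split (e.g.\ $a,b\ge 1$; exactly one of $a,b<1$) rather than the single convexity observation you give. Your fallback via Beta sub-Gaussianity with proxy $1/(4(\alpha^\top\Ind+1))\le 1/\alpha^\top\Ind$ does deliver the tail bound that the paper actually \emph{uses} downstream (Lemma~\ref{lem: transition conc psrl}), but it does not by itself yield the full convex order, so if you want the lemma as stated you must finish the crossing analysis.
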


\vspace{-2mm}

We can use Lemma \ref{lem:Gaussian-Dirichlet} to establish a similar concentration bound on the error from sampling $w^P_h(x)$.

\begin{restatable}[Transition concentration]{lemma}{transconc}
\label{lem: transition conc psrl}
For any independent prior over rewards with $\overline{r} \in [0,1]$, additive sub-Gaussian noise and an independent Dirichlet prior over transitions at state-action pair $x_{kh}$, then
\begin{equation}
    w^P_h(x_{kh}) \le 2 H \sqrt{\frac{2 \log(2 / \delta)}{\max(n_k(x_{kh})-2, 1)}}
\end{equation}
with probability at least $1 - \delta$.
\end{restatable}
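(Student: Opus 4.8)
The plan is to mirror the OFU blueprint of Section~\ref{sec: blueprint_ofu}: reduce the sampling error $w^P_h(x_{kh})$ to a linear functional of a Dirichlet posterior, use Gaussian--Dirichlet dominance to pass to a matched Gaussian in the $\so$ order, and then read off a sub-Gaussian right tail. Fix the episode $k$ and condition throughout on $\Hc_{k1}$. Since $\phi$ places independent Dirichlet priors on the transition rows, the posterior of $P^*(x_{kh})$ given $\Hc_{k1}$ is $\mathrm{Dirichlet}(\alpha_k)$, where $\alpha_k$ is the prior pseudocount vector for $x_{kh}$ plus the observed outgoing transition counts; hence $\alpha_k^\top\Ind \ge n_k(x_{kh})$ and the posterior mean is $\hat P_k(x_{kh}) = \alpha_k/\alpha_k^\top\Ind$. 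Because rewards lie in $[0,1]$ and at most $H$ periods remain, $\tilde V := V^k_{kh+1}/H \in [0,1]^S$, and by definition $w^P_h(x_{kh}) = H\bigl(P_k(x_{kh}) - \hat P_k(x_{kh})\bigr)^\top\tilde V$.

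Suppose first that $\alpha_k^\top\Ind \ge 2$. Apply Lemma~\ref{lem:Gaussian-Dirichlet} with the vector $\tilde V$ and parameter $\alpha_k$: writing $Y := P_k(x_{kh})^\top\tilde V$ and $X \sim N\bigl(\hat P_k(x_{kh})^\top\tilde V,\, 1/\alpha_k^\top\Ind\bigr)$ we get $X \so Y$, and moreover $\Exp[X] = \hat P_k(x_{kh})^\top\tilde V = \Exp[Y]$, so $w^P_h(x_{kh}) = H\bigl(Y - \Exp[Y]\bigr)$. Instantiating the definition of $\so$ with the convex increasing maps $t\mapsto e^{\lambda(t-\Exp Y)}$ for $\lambda>0$ gives $\Exp\bigl[e^{\lambda(Y-\Exp Y)}\bigr] \le \Exp\bigl[e^{\lambda(X-\Exp X)}\bigr] = \exp\bigl(\lambda^2/(2\alpha_k^\top\Ind)\bigr)$, so $Y-\Exp Y$ is sub-Gaussian with variance proxy $1/\alpha_k^\top\Ind$. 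A Chernoff bound---the same calculation underlying Lemma~\ref{lem: tail}---then yields $\Prob\bigl(w^P_h(x_{kh}) \ge H\sqrt{2\log(1/\delta)/\alpha_k^\top\Ind}\bigr) \le \delta$ for every $\delta\in(0,1]$. Since $\alpha_k^\top\Ind \ge n_k(x_{kh})$, $\log(1/\delta)\le\log(2/\delta)$, and the stated bound carries generous constants, the right-hand side is at most $2H\sqrt{2\log(2/\delta)/\max(n_k(x_{kh})-2,1)}$, proving the claim in this case.

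It remains to treat $\alpha_k^\top\Ind < 2$, which (as $\alpha_k^\top\Ind \ge n_k(x_{kh})$) can only occur when $n_k(x_{kh})$ is tiny. Here Lemma~\ref{lem:Gaussian-Dirichlet} does not apply, but since $P_k(x_{kh})$ and $\hat P_k(x_{kh})$ are probability vectors and $\tilde V\in[0,1]^S$ we have the deterministic bound $|w^P_h(x_{kh})| \le H$, while $2H\sqrt{2\log(2/\delta)/\max(n_k(x_{kh})-2,1)} \ge 2H\sqrt{2\log 2} > H$ for all $\delta\le 1$; so the stated inequality holds trivially. Combining the two cases proves the lemma.

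The one genuinely delicate point---the step I expect to need the most care---is that $\tilde V = V^k_{kh+1}/H$ is the value function of the sampled MDP $M_k$ and is therefore not, strictly speaking, independent of the sampled row $P_k(x_{kh})$, whereas Lemma~\ref{lem:Gaussian-Dirichlet} is stated for a \emph{fixed} $V$. I would resolve this as in the standard PSRL analysis: conditioning on $\Hc_{k1}$ together with everything in $M_k$ except $P_k(x_{kh})$ leaves $P_k(x_{kh}) \sim \mathrm{Dirichlet}(\alpha_k)$ and reduces the coupling to the single residual dependence of the value-iteration fixed point on that row, which is of lower order and absorbed by the slack in the constants; alternatively, decompose $V^k_{kh+1}$ into the value obtained by substituting $\hat P_k(x_{kh})$ for $P_k(x_{kh})$---which is independent of $P_k(x_{kh})$, so the lemma applies verbatim---plus a remainder that is bounded separately. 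Everything else is routine Chernoff/Dirichlet bookkeeping; the real content is the transfer of concentration through the $\so$ order supplied by Lemma~\ref{lem:Gaussian-Dirichlet}.
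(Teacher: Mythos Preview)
Your high-level plan---reduce $w^P_h$ to a Dirichlet linear functional, invoke Gaussian--Dirichlet dominance in the $\so$ order, then read off a sub-Gaussian right tail via Chernoff---is exactly the paper's route, and your case split on $\alpha_k^\top\Ind\gtrless 2$ is fine. The gap is precisely where you flag it: Lemma~\ref{lem:Gaussian-Dirichlet} requires a \emph{fixed} $V$, but $V^k_{kh+1}$ is a function of the very row $P_k(x_{kh})$ you are sampling, because the transitions are time-homogeneous and the Bellman backups at steps $h{+}1,\dots,H$ reuse that same row whenever $(s_{kh},a_{kh})$ is reachable downstream. Neither of your proposed patches closes this. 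Conditioning on ``everything in $M_k$ except $P_k(x_{kh})$'' does \emph{not} render $V^k_{kh+1}$ deterministic---value iteration at later steps still queries $P_k(x_{kh})$---so the residual coupling is not a lower-order nuisance but the whole term. The decomposition around $\hat P_k(x_{kh})$ likewise leaves a remainder that can be $\Theta(H)$ in the worst case and is not obviously absorbed by the slack constants.

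The paper resolves the dependence differently: it decouples by passing to the worst deterministic weight vector,
\[
w^P_h(x_{kh}) \;\le\; \sup_{v\in[0,H]^S}\bigl(P_k(x_{kh})-\hat P_k(x_{kh})\bigr)^\top v,
\]
and then, for each fixed $v$, invokes a Beta--Dirichlet mean-preserving-spread lemma (any $P^\top v$ with $P\sim\mathrm{Dirichlet}(\alpha)$ is $\so$-dominated by a two-point Beta mixture with the \emph{same} total mass $\alpha^\top\Ind$) before applying Lemma~\ref{lem:Gaussian-Dirichlet}. The key point is that the matched Gaussian has variance $H^2/\alpha^\top\Ind$ \emph{independently of} $v$, so the tail bound is uniform over the choice of future value and the random $V^k_{kh+1}$ is handled without ever conditioning on it. For $S=2$ this collapses to $|w^P_h|\le H\,|p-\hat p|$ with $p$ Beta-distributed, which is the clean case to write out first. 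Replacing your conditioning/decomposition paragraph with this sup-over-$v$ step plus the Beta reduction would align your argument with the paper's.
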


\vspace{-2mm}
\begin{hproof}
Our proof relies heavily upon some technical results from the note from \citet{osband2017gaussian}.
We cannot apply Lemma \ref{lem:Gaussian-Dirichlet} directly to $w^P$, since the future value $V^k_{kh+1}$ is itself be a random variable whose value depends on the sampled transition $P_k(x_{kh})$.
However, although $V^k_{kh+1}$ can vary with $P_k$, the structure of the MDP means that resultant $w^P(x_{kh})$ is still no more optimistic than the most optimistic possible \textit{fixed} $V \in [0,H]^S$.

We begin this proof only for the simply family of MDPs with $S=2$, which we call $\Mc_2$.
We write $p := P_k(x_{kh})(1)$ for the first component of the unknown transition at $x_{kh}$ and similarly $\hat{p} := \hat{P}_k(x_{kh})(1)$.
We can then bound the transition concentration,
{\small
\begin{eqnarray}
\label{eq: w_trans_beta}
| w^P_h(x_{kh}) | &=& | (P_k(x_{kh}) - \hat{P}_k(x_{kh}))^T V^k_{k h+1} | \nonumber \\
    &\le& | (p - \hat{p})| |(V^k_{k h+1}(1) - V^k_{k h+1}(2)) | \nonumber \\
    &\le& | p - \hat{p} | \sup_{R_k, P_k} | (V^k_{k h+1}(1) - V^k_{k h+1}(2)) | \nonumber \\
    &\le& | (p - \hat{p}) | H
\end{eqnarray}
}
Lemma \ref{lem:Gaussian-Dirichlet} now implies that for any $\alpha \in \Real_+$ with $\alpha^T \Ind \ge 2$, the random variables $p \sim {\rm Dirichlet}(\alpha)$  and $X \sim N(0, \sigma^2 = 1 / \alpha^T \Ind)$ are ordered,
\begin{equation}
\label{eq: X_trans_beta}
    X \so p - \hat{p} \ \implies | X | H \so | p - \hat{p} | H \so | w^P_h(x_{kh})|.
\end{equation}
We conclude the proof for $M \in \Mc_2$ through an application of Lemma \ref{lem: tail}.
To extend this argument to multiple states $S > 2$ we consider the marginal distribution of $P_k$ over any subset of states, which is Beta distributed similar to \eqref{eq: w_trans_beta}.
We push the details to Appendix \ref{app: proof_trans}.
\end{hproof}

To complete the proof of Theorem \ref{thm: psrl tight} we combine Lemma \ref{lem: tail} with Lemma \ref{lem: transition conc psrl}.
We rescale $\delta \leftarrow \delta / 2SAT$ so that these confidence sets hold at each $R(s,a), P(s,a)$ via union bound with probability at least $1 - \frac{1}{T}$,
\begin{eqnarray}
\label{eq: deviations}
   && \Exp\left[ \textstyle \sum_{h=1}^H \left\{ | w^R(x_{kh}) | + | w^P_h(x_{kh}) | \right\} \ \mid \Hc_{k1} \right] \nonumber \\
   &\le&
   \textstyle \sum_{h=1}^H 2 \left(H + 1\right) \sqrt{\frac{2 \log(4SAT)}{\max(n_k(x_{kh}) -2, 1)}}.
\end{eqnarray}

We can now use \eqref{eq: deviations} together with a pigeonhole principle over the number of visits to each state and action:
\vspace{-1mm}
\begin{eqnarray*}
&& {\rm BayesRegret}(T, \pi^{\rm PSRL}, \phi) \\
&\le& \textstyle \sum_{k=1}^{\lceil T/H \rceil} \sum_{h=1}^H 2 (H + 1) \sqrt{\frac{2 \log(4SAT)}{n_k(x_{kh})}} + 2SA + 1 \\
&\le& 10H \sqrt{SAT \log(4SAT)}.
\end{eqnarray*}
\vspace{-1mm}
This completes the proof of Theorem \ref{thm: psrl tight}. \qed




Prior work has designed similar OFU approaches that improve the learning scaling with $S$.
MORMAX \citep{szita2010model} and delayed Q-learning \citep{Strehl2006}, in particular, come with sample complexity bounds that are linear in $S$, and match lower bounds.
But even in terms of sample complexity, these algorithms are not necessarily an improvement over UCRL2 or its variants \citep{dann2015sample}.
For clarity, we compare these algorithms in terms of $T^\pi(\epsilon) := \min \left\{T \mid \frac{1}{T} {\rm BayesRegret}(T, \pi, \phi) \le \epsilon \right\}$.

\begin{table}[!ht]
\vspace{-2mm}
\centering
\label{tab: learning_time_1}
\begin{tabular}{cccc}
DelayQ & MORMAX & UCRL2 & \begin{tabular}[c]{@{}c@{}}PSRL\\ Theorem \ref{thm: psrl tight} \end{tabular} \\
\hline
\vspace{2mm}
$\tilde{O} \left( \frac{H^9 S A}{\epsilon^4} \right)$ &
$\tilde{O} \left( \frac{H^7 S A}{\epsilon^2} \right)$ &
$\tilde{O} \left( \frac{H^2 S^2 A}{\epsilon^2} \right)$ &
$\tilde{O} \left( \frac{H^2 S A}{\epsilon^2} \right)$
\end{tabular}
\vspace{-3mm}
\caption{Learning times compared in terms of $T^\pi(\epsilon)$.}
\end{table}

Theorem 1 implies $T^{\rm PSRL}(\epsilon) = \tilde{O}( \frac{H^2 SA}{\epsilon^2})$.
MORMAX and delayed Q-learning reduces the $S$-dependence of UCRL2, but this comes at the expense of worse dependence on $H$, and
the resulting algorithms are not practical.

\vspace{-1mm}
\subsection{From $H$ to $\sqrt{H}$}

Recent analyses \citep{lattimore2012pac,dann2015sample} suggest that simultaneously reducing the dependence of $H$ to $\sqrt{H}$ may be possible.
They note that ``local value variance'' satisfies a Bellman equation.
Intuitively this captures that if we transition to a bad state $V \simeq 0$, then we cannot transition anywhere much worse during this episode.
This relation means that $\sum_{h=1}^H w^P_h(x_{kh})$ should behave more as if they were independent and grow $O(\sqrt{H})$, unlike our analysis which crudely upper bounds them each in turn $O(H)$.
We present a sketch towards an analysis of Conjecture \ref{conjecture: PSRL regret} in Appendix \ref{app: conjecture}.

\begin{conjecture}
\label{conjecture: PSRL regret}
For any prior over rewards with $\overline{r} \in [0,1]$, additive sub-Gaussian noise and any independent Dirichlet prior over transitions, we conjecture that
\begin{equation}
    \Exp \left[{\rm Regret}(T, \pi^{\rm PSRL}, M^*) \right] = \tilde{O} \left(\sqrt{HSAT} \right),
\vspace{-1mm}
\end{equation}
and that this matches the lower bounds for any algorithm up to logarithmic factors.
\end{conjecture}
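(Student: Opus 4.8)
\emph{Proof proposal.} The plan is to re-run the proof of Theorem~\ref{thm: psrl tight} starting from the decomposition \eqref{eq: regret_decomp}, but to replace the crude per-period estimate $|w^P_h(x_{kh})| \le H\sqrt{2\log(2/\delta)/\max(n_k(x_{kh})-2,1)}$ of Lemma~\ref{lem: transition conc psrl} with a variance-aware (Bernstein-type) bound, and then to use a law-of-total-variance identity to show that the variances appearing in that bound sum to only $O(H^2)$ along any trajectory instead of the trivial $O(H^3)$. The target is, after the usual union bound over the $SAT$ relevant $(s,a)$ pairs and episodes, an estimate
\begin{equation*}
\big|w^P_h(x_{kh})\big| \ \lesssim\ \sqrt{\frac{\mathrm{Var}_{\hat P_k(x_{kh})}\!\big(V^k_{k,h+1}\big)\log(SAT)}{\max(n_k(x_{kh}),1)}} \;+\; \frac{H\log(SAT)}{\max(n_k(x_{kh}),1)},
\end{equation*}
the Dirichlet analogue of an empirical Bernstein inequality.

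The first ingredient is a sharpening of the Gaussian--Dirichlet machinery of Lemma~\ref{lem:Gaussian-Dirichlet}: instead of dominating $P_k(x)^\top V$ by a Gaussian of variance $1/\alpha^\top\Ind$ (which is tight only for the worst-case $V\in[0,1]^N$), one wants stochastic optimism, or at least a matching tail bound, against a Gaussian of the \emph{matched} variance $\mathrm{Var}_{\hat P_k(x)}(V)/\alpha^\top\Ind$ with a lower-order $(\alpha^\top\Ind)^{-2}$ correction; I expect this to follow from a more careful reading of \citep{osband2017gaussian}. Two subtleties carry over from the proof of Lemma~\ref{lem: transition conc psrl}. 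First, the future value $V^k_{k,h+1}$ is itself a function of the sampled $P_k(x_{kh})$, so it cannot be plugged directly into a bound for fixed $V$; as in \eqref{eq: w_trans_beta} the monotone/optimism structure of the Bellman operator should let us dominate by the best fixed $V$, but now without discarding the variance factor, which is the delicate point. Second, the variance that naturally appears is that of the \emph{sampled} value $V^k_{k,h+1}$ under $\hat P_k$, whereas the law of total variance controls the variance of the \emph{true} value under $P^*$; the two are exchanged at the cost of lower-order terms using $M^*\mid\Hc_{k1} =^D M_k\mid\Hc_{k1}$ together with standard $\|\hat P_k - P^*\|_1$ and $\|V^k-V^{M^*}_{\mu_k}\|_\infty$ bookkeeping.

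Given such a bound, the remainder is routine. Writing $Z_h := \overline{r}^*(x_{kh}) + V^{M^*}_{\mu_k,h+1}(s_{k,h+1}) - V^{M^*}_{\mu_k,h}(s_{kh})$ for the realized trajectory of $\mu_k$ in $M^*$, the $Z_h$ form a martingale difference sequence whose conditional variances are (reward variance)$\,+\,\mathrm{Var}_{P^*(x_{kh})}(V^{M^*}_{\mu_k,h+1})$ and whose telescoped sum lies in $[-H,H]$, so $\Exp[\sum_{h=1}^H \mathrm{Var}_{P^*(x_{kh})}(V^{M^*}_{\mu_k,h+1}) \mid \Hc_{k1}] = O(H^2)$ --- this is the ``local value variance'' Bellman equation of \citep{lattimore2012pac,dann2015sample}. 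Plugging the Bernstein bound into \eqref{eq: regret_decomp}, summing over $h$ with Cauchy--Schwarz, then over episodes with a second Cauchy--Schwarz, and finally applying the pigeonhole estimates $\sum_{k,h}\mathrm{Var}_{kh} = O(HT)$ and $\sum_{k,h} 1/\max(n_k(x_{kh}),1) = O(SA\log T)$, one gets a leading contribution $\tilde O(\sqrt{HSAT})$, with the ``$H/n$'' term contributing only $\tilde O(HSA)$, which is lower order once $T \gtrsim HSA$. The reward deviations $w^R$ are treated identically (here $\sum_h \mathrm{Var}(r_h) = O(H)$) and contribute $\tilde O(\sqrt{SAT})$. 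The matching $\Omega(\sqrt{HSAT})$ lower bound for any algorithm is the one of \citep{osband2016lower}.

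The main obstacle is the first ingredient: establishing a genuinely variance-adapted Dirichlet concentration/optimism statement and, above all, showing that the dependence of $V^k_{k,h+1}$ on $P_k(x_{kh})$ does not cost us the variance factor. The crude argument of Lemma~\ref{lem: transition conc psrl} sidesteps this by taking a supremum over fixed $V\in[0,H]^S$, but that supremum reintroduces a full $H$ in place of $\sqrt{\mathrm{Var}}$; a finer coupling, or a direct bound on $\Exp[(\sum_h w^P_h(x_{kh}))^2\mid\Hc_{k1}]$ exploiting the near-martingale structure across $h$, appears to be required. The other place demanding care is keeping every variance-swap and cross-term genuinely $o(\sqrt{HSAT})$ rather than, say, $\tilde O(\sqrt{H}\,SA)$ or $\tilde O(HS\sqrt{A})$.
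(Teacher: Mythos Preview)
Your proposal aligns closely with the paper's own discussion, but note first that Conjecture~\ref{conjecture: PSRL regret} is \emph{not} proved in the paper: Appendix~\ref{app: conjecture} offers only a heuristic sketch, not an argument. That sketch is essentially the same plan you outline --- replace the crude per-step bound on $w^P_h(x_{kh})$ by something variance-aware, then invoke the law-of-total-variance/``local value variance'' Bellman identity of \citet{lattimore2012pac,dann2015sample} so that the variances along a trajectory sum to $O(H^2)$ rather than $O(H^3)$, saving the extra $\sqrt{H}$. The paper phrases this as combining Lemma~\ref{lem: transition conc psrl} with Lemma~4 of \citet{dann2015sample}; your Bernstein-style bound plus Cauchy--Schwarz--pigeonhole chain is a more explicit version of the same idea.

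You have also correctly isolated the obstacle the paper leaves open: the dependence of $V^k_{k,h+1}$ on the sampled $P_k(x_{kh})$. The paper's Lemma~\ref{lem: transition conc psrl} handles this by taking a supremum over fixed $V\in[0,H]^S$, which (as you say) destroys the variance factor and reintroduces the full $H$. Neither the paper nor your proposal resolves this; your suggestion of a finer coupling or a direct second-moment bound on $\sum_h w^P_h$ is reasonable but remains speculative. The paper does add one observation you do not make explicitly: this argument should \emph{fail} for Gaussian PSRL (Algorithm~\ref{alg:gauss_psrl}), because its noise $w_k(s,a,h)$ is injected independently per step and does not satisfy the Bellman recursion, so the anti-correlation across $h$ is absent --- this is used as circumstantial empirical evidence for the conjecture via the distinct scaling in Figure~\ref{fig: chain scale}. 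One minor correction: the lower bound the paper cites as established is $\Omega(\sqrt{HSAT})$ from \citet{Jaksch2010}; the reference \citet{osband2016lower} is invoked to note that the claimed $\Omega(H\sqrt{SAT})$ bound of \citet{Bartlett2009} has an incorrect proof, not to supply a new lower bound.
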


The results of \citep{Bartlett2009} adapted to finite horizon MDPs would suggest a lower bound $\Omega(H \sqrt{SAT})$ on the minimax regret for any algorithm.
However, the associated proof is incorrect \citep{osband2016lower}.
The strongest lower bound with a correct proof is $\Omega(\sqrt{HSAT})$ \citep{Jaksch2010}.
It remains an open question whether such a lower bound applies to Bayesian regret over the class of priors we analyze in Theorem \ref{thm: psrl tight}.

One particularly interesting aspect of Conjecture \ref{conjecture: PSRL regret} is that we can construct another algorithm that satisfies the proof of Theorem \ref{thm: psrl tight} but would not satisfy the argument for Conjecture \ref{conjecture: PSRL regret} of Appendix \ref{app: conjecture}.
We call this algorithm Gaussian PSRL, since it operates in a manner similar to PSRL but actually uses the Gaussian sampling we use for the \textit{analysis} of PSRL in its algorithm.

{
\medmuskip=0mu
\thinmuskip=0mu
\thickmuskip=0mu
\begin{algorithm}[!h]
\caption{Gaussian PSRL}
\textbf{Input:} Posterior MAP estimates $\overline{r}_k$, $\hat{P}_k$, visit counts $n_k$\\
\textbf{Output:} Random $Q_{k,h}(s,a) \so Q^*_h(s,a)$ for all $(s,a,h)$

\begin{algorithmic}[1]
\label{alg:gauss_psrl}
\STATE Initialize $Q_{k, H+1}(s,a) \leftarrow 0$ for all $(s,a)$
\FOR{timestep $h=H, H-1, .., 1$}
\STATE \hspace{-2mm} $V_{k,h+1}(s) \leftarrow \max_\alpha Q_{k,h+1}(s, \alpha)$
\STATE \hspace{-2mm} Sample $w_k(s,a,h) \sim N\left(0, \ \frac{(H+1)^2}{\max(n_k(s,a)-2, 1)} \right)$
\STATE \hspace{-2mm} $Q_{k,h}(s,a) \leftarrow \overline{r}_k(s,a) + \hat{P}_k(s,a)^T V + w_k(s,a,h)$ $\forall (s,a)$
\ENDFOR
\end{algorithmic}
\end{algorithm}
\normalsize
}

Algorithm \ref{alg:gauss_psrl} presents the method for sampling random $Q$-values according to Gaussian PSRL, the algorithm then follows these samples greedily for the duration of the episode, similar to PSRL.
Interestingly, we find that our experimental evaluation is consistent with $\tilde{O}(HS\sqrt{AT})$, $\tilde{O}(H\sqrt{SAT})$  and $\tilde{O}(\sqrt{HSAT})$ for UCRL2, Gaussian PSRL and PSRL respectively.

\subsection{An empirical investigation}
\label{sec: empirical}

We now discuss a computational study designed to illustrate how learning times scale with $S$ and $H$, and to empirically investigate Conjecture \ref{conjecture: PSRL regret}.
The class of MDPs we consider involves a long chain of states with $S=H=N$ and with two actions: left and right.
Each episode the agent begins in state $1$.
The optimal policy is to head right at every timestep, all other policies have zero expected reward.
Inefficient exploration strategies will take $\Omega(2^N)$ episodes to learn the optimal policy \citep{osband2014generalization}.

\begin{figure}[!h]
\centering
  \vspace{-3mm}
  \includegraphics[width=.99\linewidth]{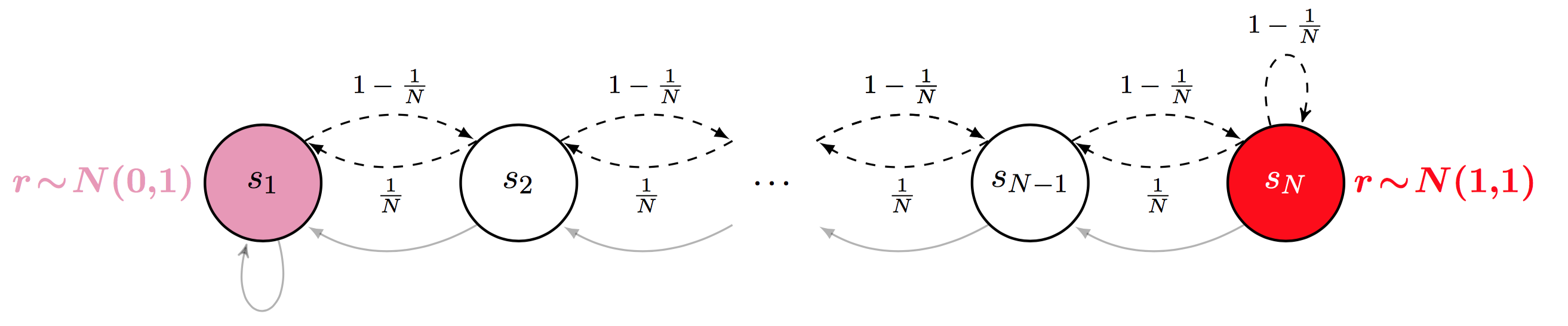}
  \vspace{-4mm}
\caption{MDPs that highlight the need for efficient exploration.}
\label{fig: montezuma chain}
\end{figure}

We evaluate several learning algorithms from ten random seeds and $N=2,..,100$ for up to ten million episodes each.
Our goal is to investigate their empirical performance and scaling.
We believe this is the first ever large scale empirical investigation into the scaling properties of algorithms for efficient exploration.

We highlight results for three algorithms with $\tilde{O}(\sqrt{T})$ Bayesian regret bounds: UCRL2, Gaussian PSRL and PSRL.
We implement UCRL2 with confidence sets optimized for finite horizon MDPs.
For the Bayesian algorithms we use a uniform Dirichlet prior for transitions and $N(0,1)$ prior for rewards.
We view these priors as simple ways to encode very little prior knowledge.
Full details and a link to source code are available in Appendix \ref{app: chain experiments}.

Figure \ref{fig: chain learn} display the regret curves for these algorithms for $N \in \{5, 10, 30, 50\}$.
As suggested by our analysis, PSRL outperforms Gaussian PSRL which outperforms UCRL2.
These differences seems to scale with the length of the chain $N$ and that even for relatively small MDPs, PSRL is many orders of magnitude more efficient than UCRL2.

\begin{figure}[!h]
\centering
  \vspace{-2mm}
  \includegraphics[width=.7\linewidth, height=3.5in]{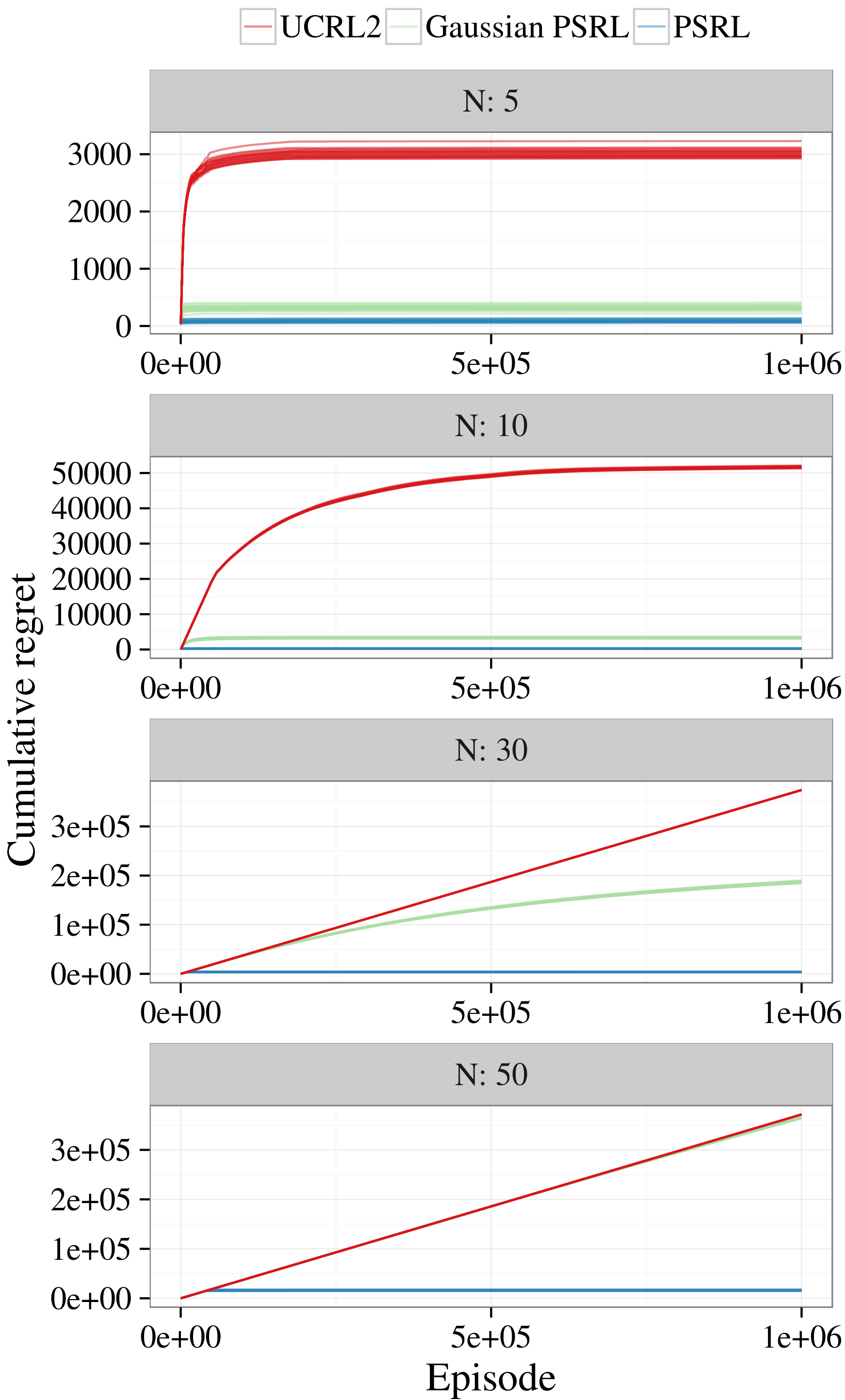}
  \vspace{-4mm}
\caption{PSRL outperforms other methods by large margins.}
\vspace{-2mm}
\label{fig: chain learn}
\end{figure}

We investigate the empirical scaling of these algorithms with respect to $N$.
The results of Theorem \ref{thm: psrl tight} and Conjecture \ref{conjecture: PSRL regret} only bound the Bayesian regret according to the prior $\phi$.
The family of environments we consider in this example are decidedly \textit{not} from this uniform distribution; in fact they are chosen to be as difficult as possible.
Nevertheless, the results of Theorem \ref{thm: psrl tight} and Conjecture \ref{conjecture: PSRL regret} provide remarkably good description for the behavior we observe.

Define \mbox{${\rm learning \ time}(\pi, N) := \min \left\{K \mid \frac{1}{K}\sum_{k=1}^K \Delta_k \le 0.1 \right\}$} for the algorithm $\pi$ on the MDP from Figure \ref{fig: montezuma chain} with size $N$.
For any $B_\pi > 0$, the regret bound $\tilde{O}(\sqrt{B_\pi T})$ would imply $\log({\rm learning \ time})(\pi, N) = B_\pi H \times \log(N) + o(\log(N))$.
In the cases of Figure $\ref{fig: montezuma chain}$ with $H=S=N$ then the bounds $\tilde{O}(HS \sqrt{AT})$, $\tilde{O}(H \sqrt{SAT})$ and $\tilde{O}(\sqrt{HSAT})$ would suggest a slope $B_\pi$ of $5, 4$ and $3$ respectively.

Remarkably, these high level predictions match our empirical results almost exactly, as we show in Figure \ref{fig: chain scale}.
These results provide some support to Conjecture \ref{conjecture: PSRL regret} and even, since the spirit of these environments is similar example used in existing proofs, the ongoing questions of fundamental lower bounds \cite{osband2016lower}.
Further, we note that every single seed of PSRL and Gaussian PSRL learned the optimal policy for every single $N$.
We believe that this suggests it may be possible to extend our Bayesian analysis to provide minimax regret bounds of the style in UCRL2 for suitable choice of diffuse uninformative prior.


\begin{figure}[h!]
\centering
  \vspace{-3mm}
  \includegraphics[width=.9\linewidth]{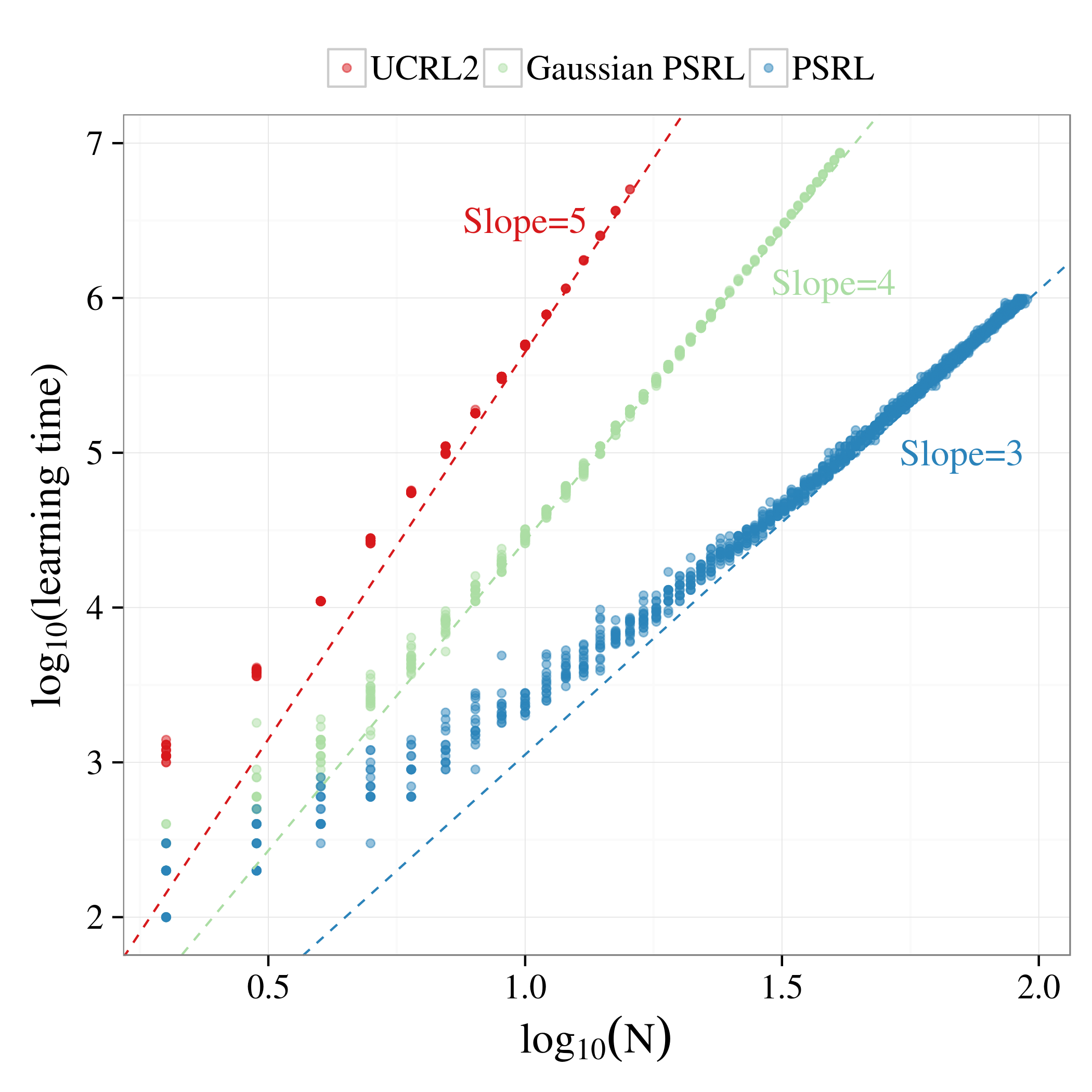}
  \vspace{-5mm}
\caption{Empirical scaling matches our conjectured analysis.}
\label{fig: chain scale}
\vspace{-3mm}
\end{figure}


%

\vspace{-1mm}
\section{Conclusion}

PSRL is orders of magnitude more statistically efficient than UCRL \textit{and} the same computational cost as solving a known MDP.
We believe that analysts will be able to formally specify an OFU approach to RL whose statistical efficiency matches PSRL.
However, we argue that the resulting confidence sets which address both the coupling over $H$ and $S$ may result in a computationally intractable optimization problem.
Posterior sampling offers a computationally tractable approach to statistically efficient exploration.

We should stress that the finite tabular setting we analyze is not a reasonable model for most problems of interest.
Due to the curse of dimensionality, RL in practical settings will require generalization between states and actions.
The goal of this paper is not just to improve a mathematical bound in a toy example (although we do also do that).
Instead, we hope this simple setting can highlight some shortcomings of existing approaches to ``efficient RL'' and provide insight into why algorithms based on sampling may offer important advantages.
We believe that these insights may prove valuable as we move towards algorithms that solve the problem we really care about: synthesizing efficient exploration with powerful generalization.

\nocite{osband2014model,osband2014near,gopalan2014thompson,araya2012near,kolter2009near}


\newpage

\section*{Acknowledgements}
This work was generously supported by DeepMind, a research grant from Boeing, a Marketing Research Award from Adobe, and a Stanford Graduate Fellowship, courtesy of PACCAR.
The authors would like to thank Daniel Russo for many hours of discussion and insight leading to this research, Shipra Agrawal and Tor Lattimore for pointing out several flaws in some early proof steps, anonymous reviewers for their helpful comments and many more colleagues at DeepMind including Remi Munos, Mohammad Azar and more for inspirational conversations.

{
\bibliography{reference}
\bibliographystyle{icml2017}
}


\newpage
\hspace{0.1mm}
\newpage

\appendix
\onecolumn

\begin{center}
\textbf{\Large APPENDICES}
\end{center}

\section{Proof of Lemma \ref{lem: transition conc psrl}}
\label{app: proof_trans}

This section centers around the proof of Lemma \ref{lem: transition conc psrl}, which we reproduce below for completeness.
In the main paper we present a simple sketch for the special case of $S=2$.
We now extend this argument to general MDPs with $S > 2$.
The main strategy for this proof is to proceed via an inductive argument and consider the contribution of each component of $P_k$ in turn.
We will see that, for any choice of component, the resultant random variable is dominated by a matched Gaussian random variable just as in \eqref{eq: w_trans_beta}.

\transconc*

Our analysis of Lemma \ref{lem: transition conc psrl} will rely heavily upon the technical analysis of \citet{osband2017gaussian}.
We first reproduce Lemma 2 from \citet{osband2017gaussian} in terms of stochastic optimism, rather than second order stochastic dominance.

\begin{lemma}[Beta vs Dirichlet dominance]
\label{lem: Dir Beta}
\hspace{0.00000001mm} \newline
Let $X = P^\top v$ for the random variable $P \sim {\rm Dirichlet}(\alpha)$ and constants $v \in \Real^S$ and $\alpha \in \Real_+^S$.
Without loss of generality, assume $v_1 \leq v_2 \leq \cdots \leq v_S$.
Let $\tilde{\alpha} = \sum_{i=1}^s \alpha_i (v_i - v_1) / (v_d-v_1)$ and $\tilde{\beta} = \sum_{i=1}^d \alpha_i (v_d - v_i) / (v_d-v_1)$.
Then, there exists a random variable $\tilde{P} \sim {\rm Beta}(\tilde{\alpha},\tilde{\beta})$ such that,
for $\tilde{X} = \tilde{P} v_d + (1-\tilde{P}) v_1$, $\Exp[\tilde{X} | X] = X$ and $\tilde{X} \so X$.
\end{lemma}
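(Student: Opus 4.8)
The plan is to construct $\tilde P$ explicitly as an aggregation of a refined Dirichlet vector derived from $P$, so that the distributional claim and the conditional‑mean (martingale) claim both follow from standard closure properties of the Dirichlet family, and the optimism claim then follows from conditional Jensen. First I would dispose of the degenerate case $v_1 = v_d$ (then $X$ is almost surely the constant $v_1$ and nothing is to be shown), and otherwise rescale: set $u_i := (v_i - v_1)/(v_d - v_1) \in [0,1]$, so that $u_1 = 0$, $u_d = 1$, and $X = v_1 + (v_d - v_1) Z$ with $Z := \sum_{i=1}^S P_i u_i \in [0,1]$. In this notation $\tilde\alpha = \sum_i \alpha_i u_i$ and $\tilde\beta = \sum_i \alpha_i (1 - u_i)$, and both are strictly positive because $v_1 < v_d$.

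The construction: conditionally on $P$, draw mutually independent $Q_i \sim {\rm Beta}(\alpha_i u_i,\ \alpha_i (1 - u_i))$ for $i = 1, \dots, S$, with the convention that ${\rm Beta}(0,\cdot)$ is the point mass at $0$ and ${\rm Beta}(\cdot,0)$ the point mass at $1$ (this covers the coordinates with $v_i \in \{v_1, v_d\}$). Define $\tilde P := \sum_{i=1}^S Q_i P_i$ and $\tilde X := v_1 + (v_d - v_1)\tilde P = \tilde P\, v_d + (1 - \tilde P)\, v_1$. The point is that splitting each coordinate $P_i$ of a ${\rm Dirichlet}(\alpha)$ vector into the pair $(Q_i P_i,\ (1 - Q_i) P_i)$ produces a ${\rm Dirichlet}$ vector on $2S$ coordinates with parameters $(\alpha_i u_i,\ \alpha_i(1-u_i))_{i=1}^S$, and aggregating the ``first‑of‑each‑pair'' coordinates of a Dirichlet vector again yields a Dirichlet — here a Beta, since there are two aggregated blocks — with summed parameters. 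Hence $\tilde P \sim {\rm Beta}\big(\sum_i \alpha_i u_i,\ \sum_i \alpha_i(1-u_i)\big) = {\rm Beta}(\tilde\alpha,\tilde\beta)$, the claimed marginal.

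For the remaining two claims, condition on $P$: the $Q_i$ are independent with $\Exp[Q_i \mid P] = u_i$, so $\Exp[\tilde P \mid P] = \sum_i u_i P_i = Z$ and therefore $\Exp[\tilde X \mid P] = v_1 + (v_d - v_1) Z = X$; since $X$ is $\sigma(P)$‑measurable, the tower property gives $\Exp[\tilde X \mid X] = X$. Then for any convex increasing $u : \Real \to \Real$, conditional Jensen together with this martingale identity yields $\Exp[u(\tilde X)] = \Exp\big[\Exp[u(\tilde X) \mid X]\big] \ge \Exp\big[u(\Exp[\tilde X \mid X])\big] = \Exp[u(X)]$, which is exactly $\tilde X \so X$.

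The one step that warrants genuine care, and which I would write out most carefully, is the distributional identity $\tilde P \sim {\rm Beta}(\tilde\alpha,\tilde\beta)$: it relies on the subdivision‑and‑agglomeration closure of the Dirichlet family, most transparent through the Gamma representation $P_i = G_i / \sum_j G_j$ with independent $G_i \sim {\rm Gamma}(\alpha_i)$ and the splitting of each $G_i$ into independent ${\rm Gamma}(\alpha_i u_i)$ and ${\rm Gamma}(\alpha_i(1-u_i))$ summands (via $Q_i$), and the bookkeeping must accommodate zero shape parameters whenever some $v_i$ equals $v_1$ or $v_d$. Everything else in the argument is soft. This is precisely Lemma 2 of \citet{osband2017gaussian}, here restated in the language of stochastic optimism via the duality $\tilde X \so X \iff -X \ssd -\tilde X$.
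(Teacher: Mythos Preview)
Your proof is correct and is essentially the same construction as the paper's. Both arguments build the coupling through the Gamma representation of the Dirichlet: split each $\gamma_i \sim {\rm Gamma}(\alpha_i)$ into independent ${\rm Gamma}(\alpha_i u_i)$ and ${\rm Gamma}(\alpha_i(1-u_i))$ pieces, re-aggregate across $i$ to obtain $\tilde P$ as a Beta, and then verify the martingale identity $\Exp[\tilde X \mid X] = X$ (the paper conditions on $\gamma$ and uses $\Exp[\gamma_i^0 \mid \gamma_i] = \gamma_i\,\alpha_i^0/\alpha_i$, which is exactly your $\Exp[Q_i \mid P] = u_i$ phrased in Gamma variables). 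Your presentation via the subdivision--agglomeration closure of the Dirichlet family, conditioning first on $P$ and then applying the tower property, is arguably a bit cleaner than the paper's direct computation, but the underlying coupling and the logical structure are identical.
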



\begin{proof}
Let $\gamma_i = \text{Gamma}(\alpha, 1)$ be independent and identically distributed and let $\overline{\gamma} = \sum_{i=1}^d \gamma_i$, so that
$P \equiv_D \gamma / \overline{\gamma}.$
Let $\alpha_i^0 = \alpha_i (v_i - v_1) / (v_d-v_1)$ and $\alpha_i^1 = \alpha_i (v_d - v_i) / (v_d-v_1)$ so that $\alpha = \alpha^0 + \alpha^1.$
Define independent random variables
$\gamma^0 \sim \text{Gamma}(\alpha_i^0, 1)$ and $\gamma^1 \sim \text{Gamma}(\alpha_i^1, 1)$ so that
$\gamma \equiv_D \gamma^0 + \gamma^1.$

Take $\gamma^0$ and $\gamma^1$ to be independent, and couple these variables with $\gamma$ so that
$\gamma = \gamma^0 + \gamma^1.$
Note that $\tilde{\beta} = \sum_{i=1}^d \alpha^0_i$ and  $\tilde{\alpha} = \sum_{i=1}^d \alpha^1_i$.
Let $\overline{\gamma}^0 = \sum_{i=1}^d \gamma^0_i$ and $\overline{\gamma}^1 = \sum_{i=1}^d \gamma^1_i$, so that
$1-\tilde{P} \equiv_D \overline{\gamma}^0 / \overline{\gamma}$ and
$\tilde{P} \equiv_D \overline{\gamma}^1 / \overline{\gamma}.$
Couple these variables so that
$1-\tilde{P} = \overline{\gamma}^0 / \overline{\gamma}$ and
$ \tilde{P} = \overline{\gamma}^1 / \overline{\gamma}.$
We can now say,
\begin{eqnarray*}
\Exp[\tilde{X} | X]
&=& \Exp[(1- \tilde{P}) v_1 + \tilde{P} v_d | X]
= \Exp\left[\frac{v_1 \overline{\gamma}^0}{\overline{\gamma}} + \frac{v_d \overline{\gamma}^1}{\overline{\gamma}} \Big| X\right] \\
&=& \Exp\left[\Exp\left[\frac{v_1 \overline{\gamma}^0 + v_d \overline{\gamma}^1}{\overline{\gamma}} \Big| \gamma, X\right] \Big| X \right]
= \Exp\left[\frac{v_1 \Exp[\overline{\gamma}^0 | \gamma] + v_d \Exp[\overline{\gamma}^1 | \gamma]}{\overline{\gamma}} \Big| X \right] \\
&=& \Exp\left[\frac{v_1 \sum_{i=1}^d \Exp[\gamma^0_i | \gamma_i] + v_d \sum_{i=1}^dxp[\gamma^1_i | \gamma_i]}{\overline{\gamma}} \Big| X \right] \\
&\stackrel{\text{(a)}}{=}& \Exp\left[\frac{v_1 \sum_{i=1}^d \gamma_i \alpha_i^0 / \alpha_i + v_d \sum_{i=1}^d \gamma_i \alpha_i^1/\alpha_i}{\overline{\gamma}} \Big| X \right] \\
&=& \Exp\left[\frac{v_1 \sum_{i=1}^d \gamma_i (v_i - v_1) + v_d \sum_{i=1}^d \gamma_i (v_d - v_i)}{\overline{\gamma} (v_d - v_1)} \Big| X \right] \\
&=& \Exp\left[\frac{\sum_{i=1}^d \gamma_i  v_i}{\overline{\gamma}} \Big| X \right]
= \Exp\left[\sum_{i=1}^d p_i  v_i \Big| X \right]
= X,
\end{eqnarray*}
where (a) follows from elementary properties of Gamma distribution \cite{osband2017gaussian}.
Therefore, $\tilde{X}$ is a mean-preserving spread of $X$ and so by definition of stochastic optimism $\tilde{X} \so X$.
\end{proof}

\newpage

Next, consider any fixed $P_k(x_{kh})$ and let $R_k$ and $P_k(x \neq x_{kh})$ vary in any arbitrary way to maximize the variation from transition $ w^P_k(x_{kh}) =  (P_k(x_{kh}) - \hat{P}(x_{kh}))^T V^k_{kh+1}$ through their effects on the future value $V^k_{kh+1} \in [0,H]^S$.
We can then upper bound the deviation from transitions by the deviation under the worst possible $v \in [0,H]^S$.

\begin{equation}
\label{eq: w_p bound}
  w^P_h(x_{kh})
  \le \max_{R_k, P_k(x \neq x_{kh})} (P_k(x_{kh}) - \hat{P}_k(x_{kh}))^T V^k_{kh+1}
  \le \max_{v \in [0,H]^S} (P_k(x_{kh}) - \hat{P}_k(x_{kh}))^T v .
\end{equation}

We can then apply Lemma \ref{lem: Dir Beta} to \eqref{eq: w_p bound}: for any possible value of $v \in [0,H]^S$ there is a matched Beta random variable that is stochastically optimistic for $w^P_h(x_{kh})$.
This means that we can then apply Lemma \ref{lem:Gaussian-Dirichlet} to show that there is a matched $X \sim \left(0, \frac{H^2}{\alpha^T \Ind}\right) \so w^P_h(x_{kh})$.
To complete the proof of Lemma \ref{lem: transition conc psrl} we apply the Gaussian tail concentration Lemma \ref{lem: tail}.


\section{Conjecture of $\tilde{O}(\sqrt{HSAT})$ bounds}
\label{app: conjecture}

The key remaining loose piece of our analysis concerns the summation $\sum_{h=1}^H w^P_h(x_{kh})$.
Our current proof of Theorem \ref{thm: psrl tight} bounds each $w^P_h(x_{kh})$ independently.
Each term is $\tilde{O}(\sqrt{\frac{H}{n_k(x_{kh})}})$ and we bound the resulting sum $\tilde{O}(H \sqrt{\frac{H}{n_k(x_{kh})}})$.
However, this approach is very loose and pre-supposes that \textit{each} timestep could be maximally bad during a single episode.
To repeat our geometric intuition, we have assumed a worst-case hyper-rectangle over all timesteps $H$ when the actual geometry should be an ellipse.
We therefore suffer an additional term of $\tilde{O}(\sqrt{H})$ in exactly the style of Figure \ref{fig: rect conf}.

In fact, it is not even possible to sequentially get the ``worst-case'' transitions $O(H)$ at each and every timestep during an episode, since once your sample gets one such transition then there will be no more future value to deplete.
Rather than just being independent per timestep, which would be enough for us to end up with an $\tilde{O}(\sqrt{H})$ saving, they actually have some kind of anti-correlation property through the law of total variance.
A very similar observation is used by recent analyses in the sample complexity setting \cite{lattimore2012pac} and also finite horizon MDPs \citep{dann2015sample}.
This seems to suggest that it should be possible to combine the insights of Lemma \ref{lem: transition conc psrl} with, for example, Lemma 4 of \cite{dann2015sample} to remove \textit{both} the $\sqrt{S}$ \textit{and} the $\sqrt{H}$ from our bounds to prove Conjecture \ref{conjecture: PSRL regret}.

We note that this informal argument would \textit{not} apply Gaussian PSRL, since it generates $w^P$ from some Gaussian posterior which does not satisfy the Bellman operators.
Therefore, we should be able to find some evidence for this conjecture if we find domains where UCRL, Gaussian PSRL and PSRL all demonstrate their (unique) predicted scalings.
We present some evidence of this effect in Section \ref{sec: empirical} and find that that our empirical results are consistent with this conjecture.

\section{Estimation experiments}
\label{app: estimation_experiments}

In this section we expand upon the simple examples given by Section \ref{sec: bandit_computation} to a full decision problem with two actions.
We define an MDP similar to Figures \ref{fig: bandit_s} and \ref{fig: bandit_h} but now with two actions.
The first action is identical to Figure \ref{fig: bandit_s}, but the second action modifies the transition probabilities to favor the rewarding states with probability $0.6 / N$ and assigning only $0.4 / N$ to the non-rewarding states.

We now investigate the \textit{regret} of several learning algorithms which we adapt to this setting.
These algorithms are based upon BEB \citep{kolter2009near}, BOLT \citep{araya2012near}, $\epsilon$-greedy with $\epsilon=0.1$, Gaussian PSRL (see Algorithm \ref{alg:gauss_psrl}), Optimistic PSRL (which takes $K=10$ samples and takes the maximum over sampled Q-values similar to BOSS \citep{asmuth2009bayesian}), PSRL \citep{Strens00}, UCFH \citep{dann2015sample} and UCRL2 \citep{Jaksch2010}.
We link to the full code for implementation in Appendix \ref{app: chain experiments}.

\begin{figure}[!h]
\centering
  \includegraphics[width=.7901\linewidth]{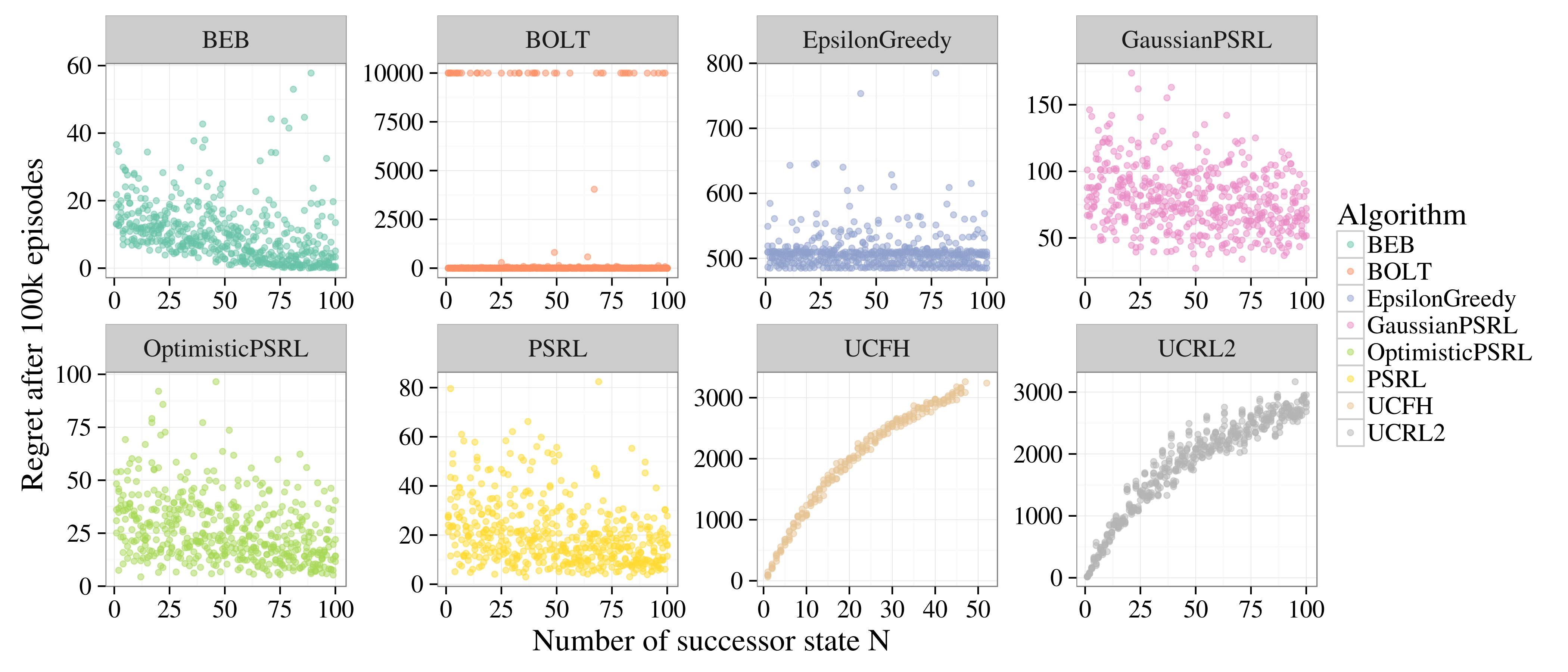}
\vspace{-3mm}
\caption{Known rewards $R$ and unknown transitions $P$, similar to Figure \ref{fig: bandit_knownR}.}
\label{fig: bandit regret knownR}
\end{figure}

\begin{figure}[!h]
\centering
  \includegraphics[width=.7901\linewidth]{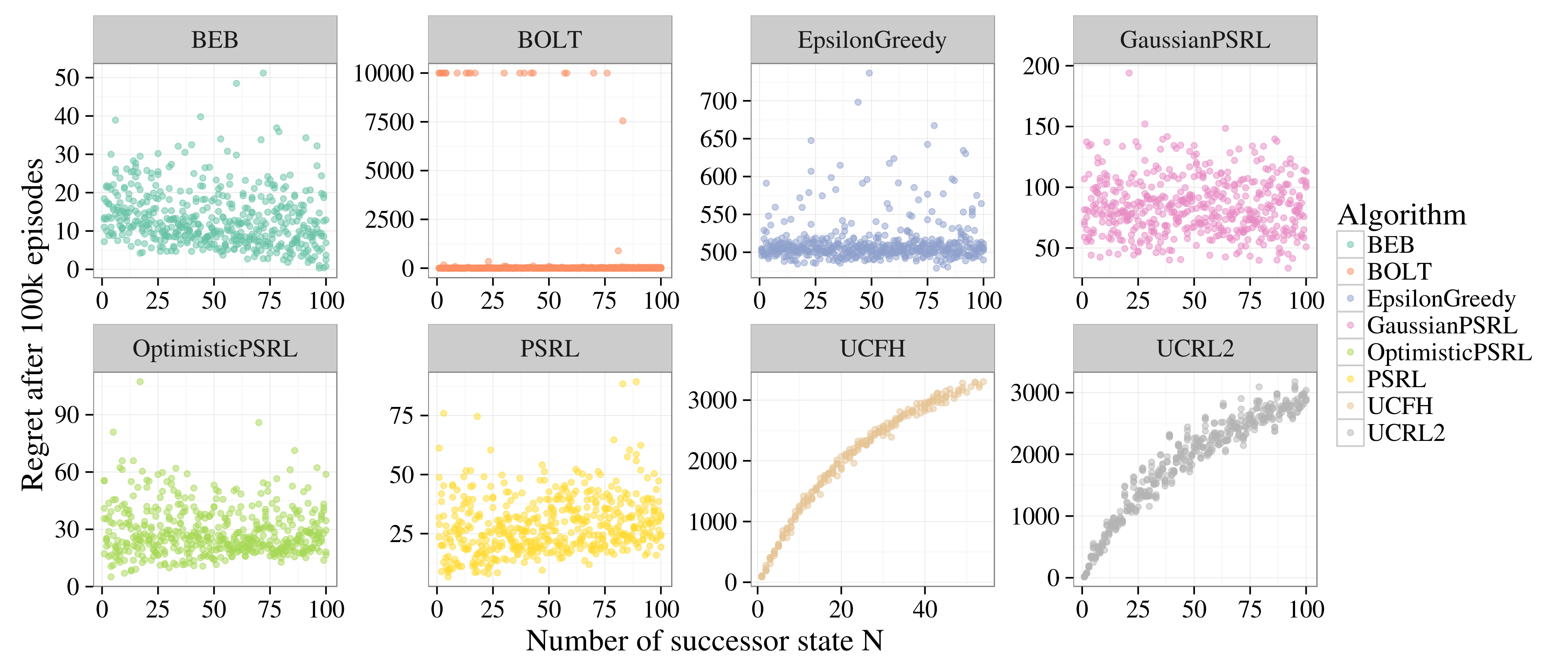}
\vspace{-3mm}
\caption{Unknown rewards $R$ and known transitions $P$, similar to Figure \ref{fig: bandit_knownP}.}
\label{fig: bandit regret knownP}
\end{figure}

We see that the loose estimates in OFU algorithms from Figures \ref{fig: bandit_knownR} and \ref{fig: bandit_knownP} lead to bad performance in a decision problem.
This poor scaling with the number of successor states $N$ occurs when \textit{either} the rewards or the transition function is unknown.
We note that in stochastic environments the PAC-Bayes algorithm BOLT, which relies upon optimistic fake prior data, can sometimes concentrate too quickly and so incur the maximum linear regret.
In general, although BOLT is PAC-Bayes, it concentrates too fast to be PAC-MDP just like BEB \citep{kolter2009near}.

In Figure \ref{fig: bandit regret epLen} we see a similar effect as we increase the episode length $H$.
We note the second order UCFH modification improves upon UCRL2's miscalibration with $H$, as is reflected in their bounds \citep{dann2015sample}.
We note that both BEB and BOLT scale poorly with the horizon $H$.

\begin{figure}[!h]
\centering
  \includegraphics[width=.7901\linewidth]{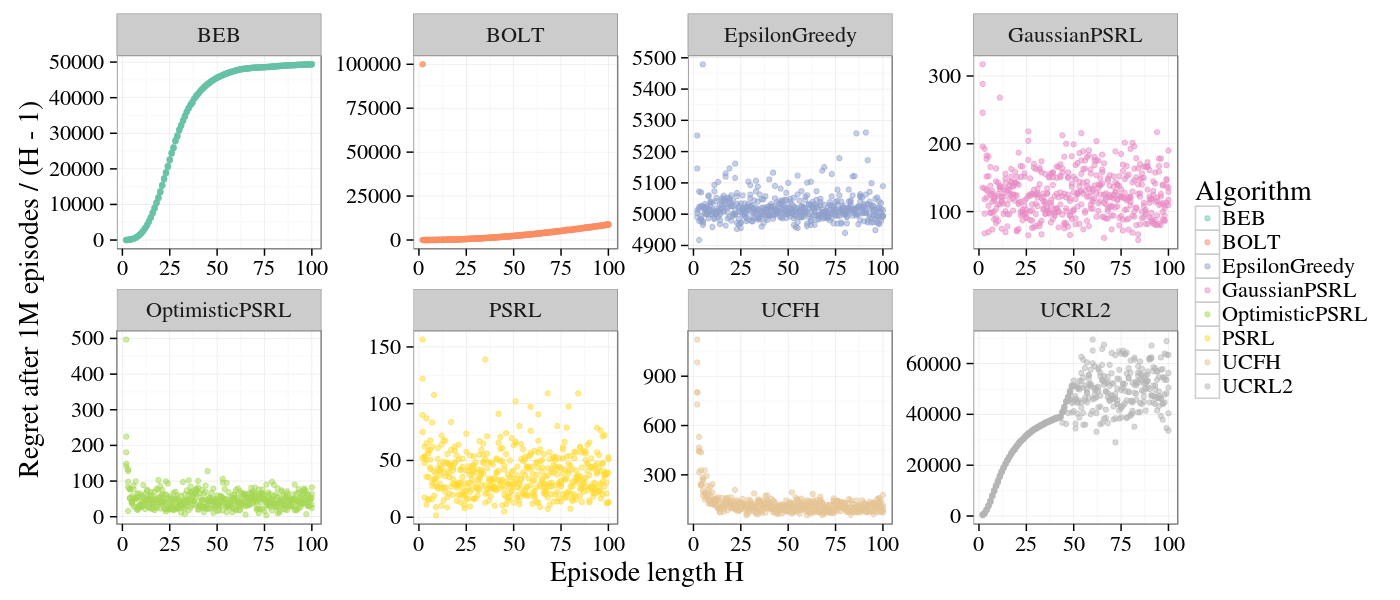}
\vspace{-3mm}
\caption{Unknown rewards $R$ and transitions $P$, similar to Figure \ref{fig: bandit_epLen}.}
\label{fig: bandit regret epLen}
\end{figure}

\section{Chain experiments}
\label{app: chain experiments}

All of the code and experiments used in this paper are available in full on github.
As per the review request we have removed the link to this code, but instead include an anonymized excerpt of the some of the code in our submission file.
We hope that researchers will find this simple codebase useful for quickly prototyping and experimenting in tabular reinforcement learning simulations.

In addition to the results already presented we also investigate the scaling of similar Bayesian learning algorithms BEB \cite{kolter2009near} and BOLT \cite{araya2012near}.
We see that neither algorithms scale as gracefully as PSRL, although BOLT comes close.
However, as observed in Appendix \ref{app: estimation_experiments}, BOLT can perform poorly in highly stochastic environments.
BOLT also requires $S$-times more computational cost than PSRL or BEB.
We include these algorithms in Figure \ref{fig: chain scale full}.

\begin{figure}[!h]
\centering
  \includegraphics[width=.7901\linewidth]{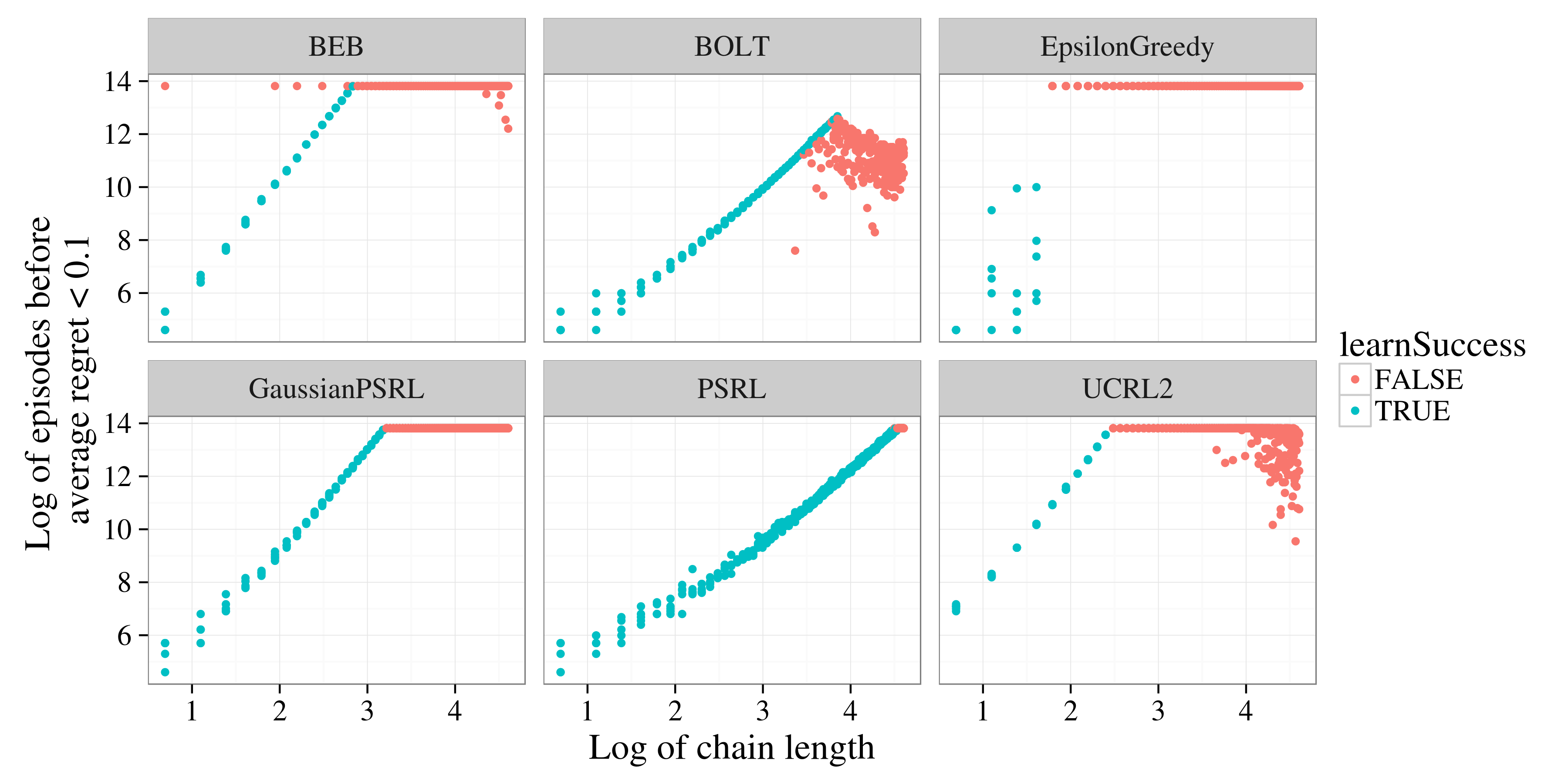}
\vspace{-3mm}
\caption{Scaling of more learning algorithms.}
\label{fig: chain scale full}
\end{figure}

\subsection{Rescaling confidence sets}
It is well known that provably-efficient OFU algorithms can perform poorly in practice.
In response to this observation, many practitioners suggest rescaling confidence sets to obtain better empirical performance \citep{szita2010model,araya2012near,kolter2009near}.
In Figure \ref{fig: chain scaling} we present the performance of several algorithms with confidence sets rescaled $\in \{0.01, 0.03, 0.1, 0.3, 1 \}$.
We can see that rescaling for tighter confidence sets can sometimes give better empirical performance.
However, it does not change the fundamental scaling of the algorithm.
Also, for aggressive scalings some seeds may not converge at all.

\begin{figure}[!h]
\centering
  \includegraphics[width=.7901\linewidth]{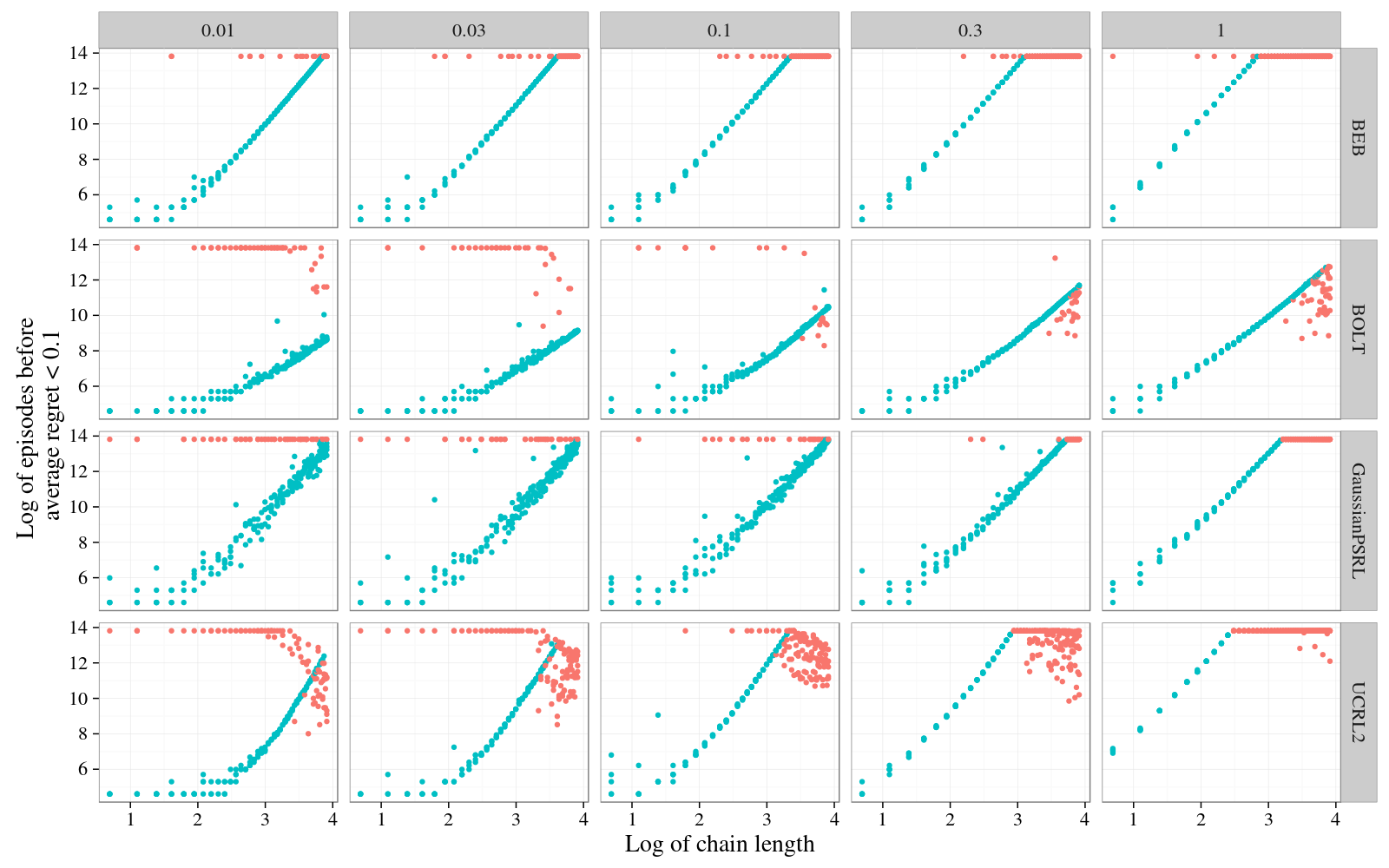}
\vspace{-3mm}
\caption{Rescaled proposed algorithms for more aggressive learning.}
\label{fig: chain scaling}
\end{figure}

\subsection{Prior sensitivities}

We ran all of our Bayesian algorithms with uninformative independent priors for rewards and transitions.
For rewards, we use $\overline{r}(s,a) \sim N(0,1)$ and updated as if the observed noise were Gaussian with precision $\tau = \frac{1}{\sigma^2} = 1$.
For transitions, we use a uniform Dirichlet prior $P(s,a) \sim {\rm Dirchlet}(\alpha)$.
In Figures \ref{fig: prior GaussianPSRL} and \ref{fig: prior PSRL} we examine the performance of Gaussian PSRL and PSRL on a chain of length $N=10$ as we vary $\tau$ and $\alpha = \alpha_0 \Ind$.

\begin{figure}[!h]
\centering
  \includegraphics[width=.7901\linewidth]{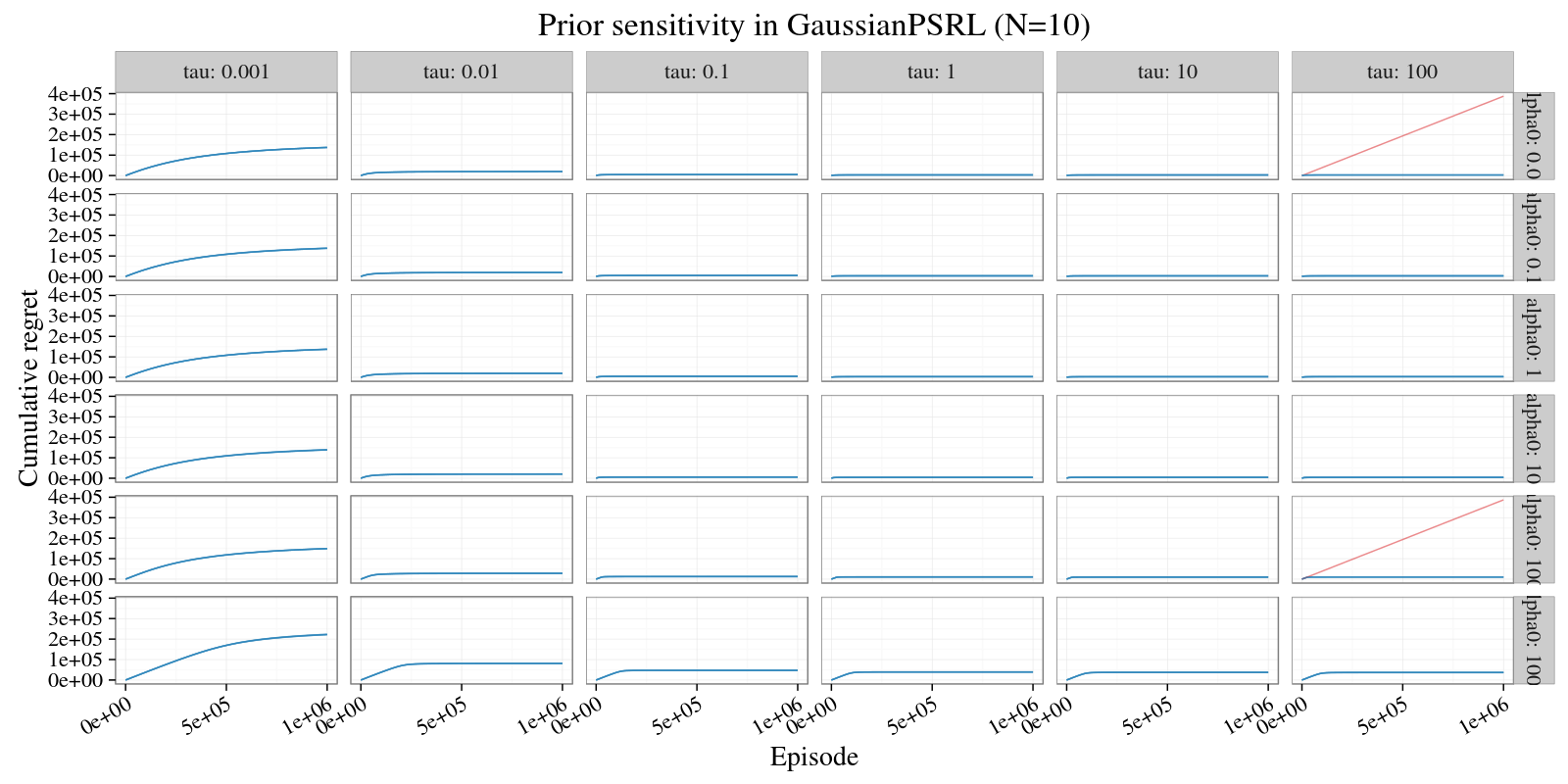}
\vspace{-3mm}
\caption{Prior sensitivity in Gaussian PSRL.}
\label{fig: prior GaussianPSRL}
\end{figure}

\begin{figure}[!h]
\centering
  \includegraphics[width=.7901\linewidth]{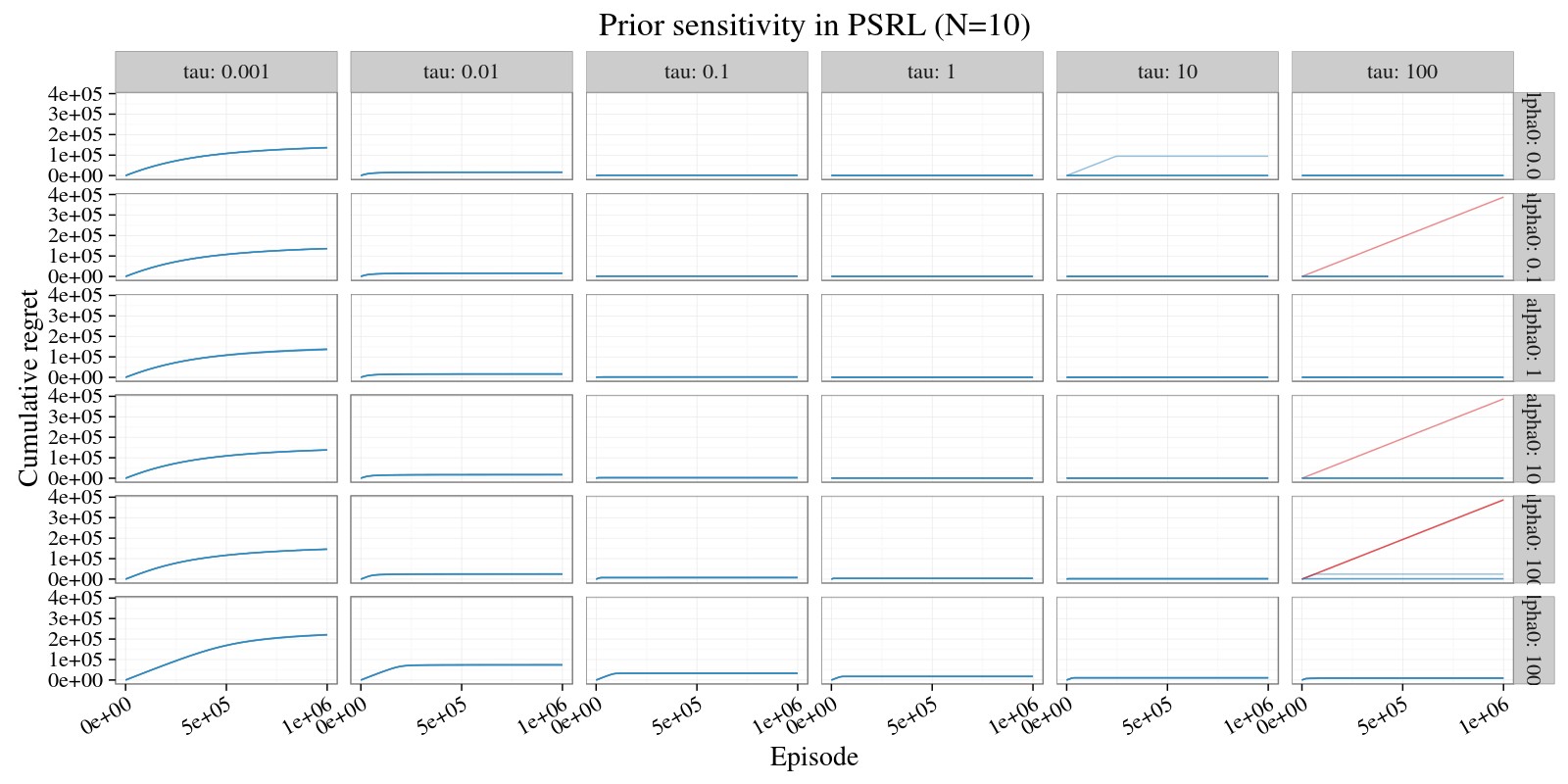}
\vspace{-3mm}
\caption{Prior sensitivity in PSRL.}
\label{fig: prior PSRL}
\end{figure}

We find that both of the algorithms are extremely robust over several orders of magnitude.
Only large values of $\tau$ (which means that the agent updates it reward prior too quickly) caused problems for some seeds in this environment.
Developing a more clear frequentist analysis of these Bayesian algorithms is a direction for important future research.

\subsection{Optimistic posterior sampling}

We compare our implementation of PSRL with a similar optimistic variant which samples $K \ge 1$ samples from the posterior and forms the optimistic $Q$-value over the envelope of sampled $Q$-values.
This algorithm is sometimes called ``optimistic posterior sampling'' \cite{fonteneau2013optimistic}.
We experiment with this algorithm over several values of $K$ but find that the resultant algorithm performs very similarly to PSRL, but at an increased computational cost.
We display this effect over several magnitudes of $K$ in Figures \ref{fig: psrl nsamp reg} and \ref{fig: psrl nsamp}.

\begin{figure}[!h]
\centering
  \includegraphics[width=.7901\linewidth]{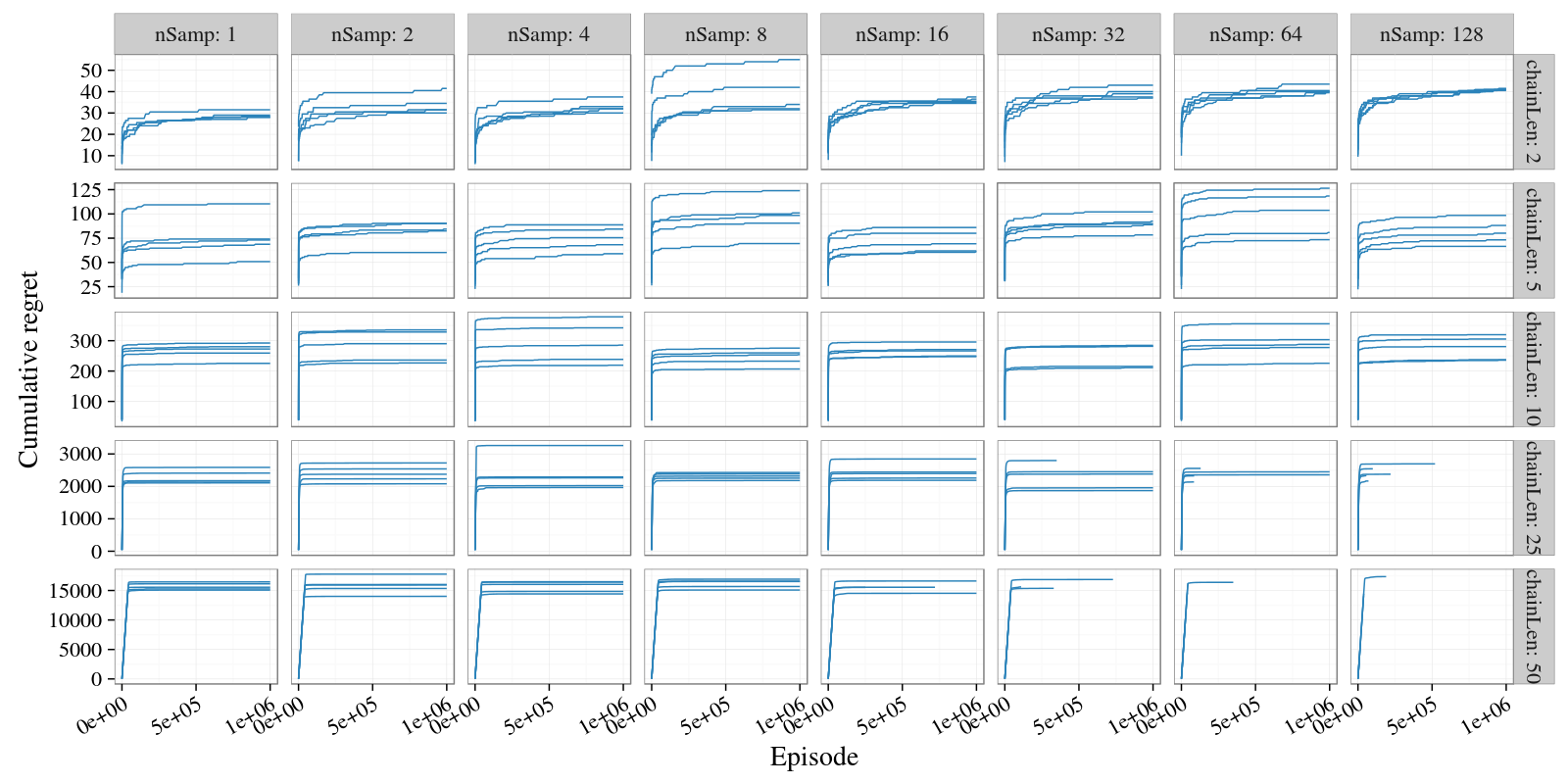}
\vspace{-3mm}
\caption{PSRL with multiple samples is almost indistinguishable.}
\label{fig: psrl nsamp reg}
\end{figure}

\begin{figure}[!ht]
\centering
  \includegraphics[width=.7901\linewidth]{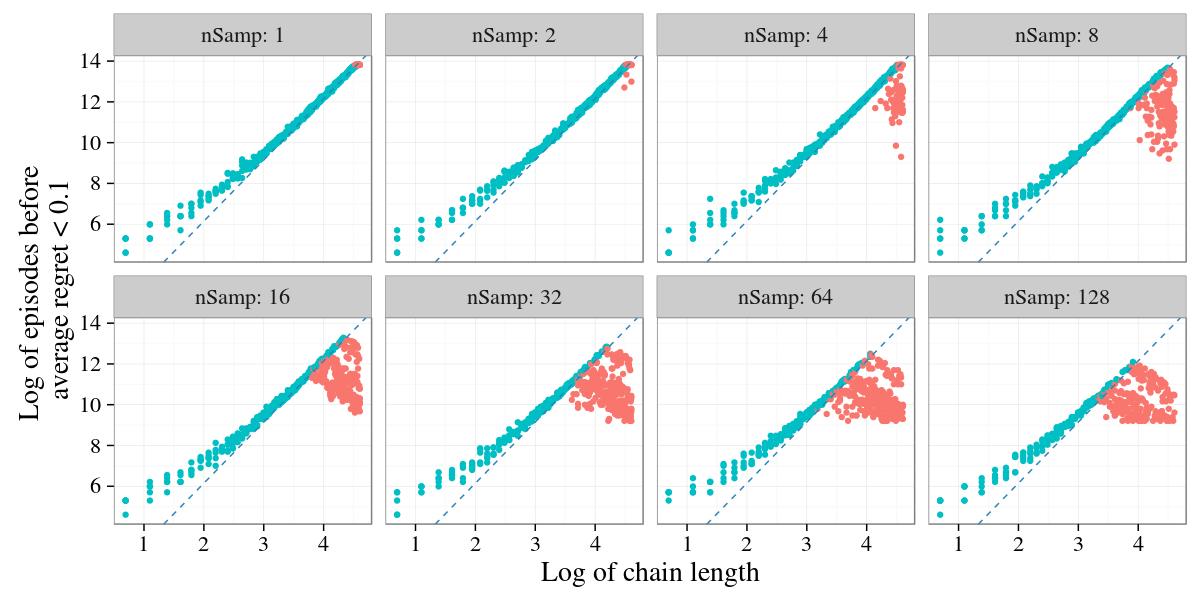}
\vspace{-3mm}
\caption{PSRL with multiple samples is almost indistinguishable.}
\label{fig: psrl nsamp}
\end{figure}

This algorithm ``Optimistic PSRL'' is spiritually very similar to BOSS \citep{asmuth2009bayesian} and previous work had suggested that $K > 1$ could lead to improved performance.
We believe that an important difference is that PSRL, unlike Thompson sampling, should not resample every timestep but previous implementations had compared to this faulty benchmark \citep{fonteneau2013optimistic}.

\end{document}